\documentclass[11pt]{article}
\usepackage[style=alphabetic,natbib=true]{biblatex}
\usepackage{microtype}
\usepackage{geometry}
\usepackage{amsmath,amsfonts,amssymb,amsthm}
\usepackage{graphicx}
\usepackage{tcolorbox}
\usepackage{enumerate}
\usepackage{bbm}
\usepackage{tikz}
\usepackage{verbatim}
\usepackage{breqn}
\usepackage{algorithm}
\usepackage{algpseudocode}
\usepackage{hyperref,color}
\usepackage{cases}
\usepackage{turnstile}
\hypersetup{colorlinks=true,citecolor=blue, linkcolor=blue, urlcolor=blue}
\usepackage{thm-restate}

\makeatletter
\newcommand{\newreptheorem}[2]{\newtheorem*{rep@#1}{\rep@title}\newenvironment{rep#1}[1]{\def\rep@title{#2 \ref*{##1}}\begin{rep@#1}}{\end{rep@#1}}}
\makeatother

\newtheorem{theorem}{Theorem}

\newtheorem{defn}[theorem]{Definition}
\newtheorem{lemma}[theorem]{Lemma}
\newtheorem{coro}[theorem]{Corollary}

\newtheorem{claim}[theorem]{Claim}

\newtheorem{iconj}[theorem]{Informal Conjecture}

\newcommand{\poly}{\mathrm{poly}}

\newcommand{\E}{\mathbb{E}}
\newcommand{\Var}{\mathrm{Var}}

\newcommand{\norm}[1]{\left\lVert #1 \right\rVert}

\def\calD{\mathcal{D}}

\def\eps{\epsilon}

\newcommand{\R}{\mathbb{R}}

\newcommand{\iprod}[1]{\langle#1\rangle}

\newcommand{\He}{\mathrm{He}}

\newcommand{\Qrot}{Q_{\mathrm{rot}}}
\newcommand{\Pv}{\mathcal{P}_v}

\newcommand{\PLA}{\textsc{PLANTED}}
\newcommand{\NULL}{\textsc{NULL}}

 \newcommand{\NO}{\NULL}
 \newcommand{\YES}{\PLA}

\newtheorem{itheorem}[theorem]{Informal Theorem}

\newcommand{\Psymb}{\mathbb{P}}

\newcommand{\conv}{\mathrm{conv}}

  \DeclareMathOperator*{\ProbOp}{\Psymb}

\renewcommand{\Pr}{\ProbOp}

\newcommand{\LDA}{\textrm{LDA}}

\newif\ifnotes\notestrue

\ifnotes
\usepackage{color}
\definecolor{mygrey}{gray}{0.50}
\newcommand{\notename}[2]{{\textcolor{blue}{\footnotesize{\bf (#1:} {#2}{\bf ) }}}}

\newcommand{\anote}[1]{{\notename{Aravindan}{#1}}}

\else

\newcommand{\notename}[2]{{}}

\newcommand{\He}[1]{}
\newcommand{\anote}[1]{}

\fi

\title{Low-Degree Method Fails to Predict Robust Subspace Recovery }

\author{}
\author{He Jia \\ Northwestern University \\ he.jia@northwestern.edu \and Aravindan Vijayaraghavan \\ Northwestern University \\ aravindv@northwestern.edu}
\date{}
\begin{document}

\maketitle

\begin{abstract}
    The low-degree polynomial framework has been highly successful in predicting computational versus statistical gaps for high-dimensional problems in average-case analysis and machine learning. This success has led to the low-degree conjecture, which posits that this method captures the power and limitations of efficient algorithms for a wide class of high-dimensional statistical problems. 
    
    We identify a natural and basic hypothesis testing problem in $\R^n$ which is polynomial time solvable, but for which %
    the low-degree polynomial method fails to predict its computational tractability even up to degree $k=n^{\Omega(1)}$. Moreover, the low-degree moments match exactly up to degree $k=O(\sqrt{\log n/\log\log n})$.  
 
    Our problem is a special case of the well-studied robust subspace recovery problem. The lower bounds suggest that there is no polynomial time algorithm for this problem. In contrast, we give a simple and robust polynomial time algorithm that solves the problem (and noisy variants of it), leveraging anti-concentration properties of the distribution. Our results suggest that the low-degree method and low-degree moments fail to capture algorithms based on anti-concentration, challenging their universality as a predictor of computational barriers.
\end{abstract}

\section{Introduction}
Several problems in high-dimensional statistics and machine learning exhibit striking {\em statistical vs computational gaps} --- these are gaps in the settings where it is possible to recover hidden structure or signal from polynomial samples, yet no known polynomial time algorithms succeed in doing so. Such statistical-computational gaps have been observed in a wide range of problems including planted clique, sparse PCA, community detection, low-rank matrix and tensor decomposition and estimation problems, mixtures of Gaussians, robust estimation and many others \citep{barak2019nearly,berthet2013computational,hopkins2017bayesian,diakonikolas2017statistical, hopkins2018, kunisky2019notes, wein2025computational}. 
A central goal in the theory of average-case analysis is to predict when such statistical-computational gaps arise, and to understand the limits of efficient algorithms for such statistical inference problems. Unlike the worst-case setting, where reductions and hardness conjectures (such as $P \ne NP$) organize our understanding, there has been less success in establishing strong computational hardness assuming a few central assumptions in the average-case world, barring some notable exceptions~\citep{regev2009lattices,bruna2021continuous,brennan2019optimal, brennan2018reducibility}. The predominant approach has been to establish lower-bounds against specific families of algorithms, aiming to characterize the limits of broad algorithmic paradigms through lower bounds against statistical query algorithms, low-degree polynomials, convex relaxation hierarchies, local algorithms, and stable algorithms~\citep{kearns98, feldman2017statistical, diakonikolas2023sq,wein2025computational,hopkins2017power, gamarnik2021}.    

The low-degree polynomial method has emerged as among the most popular approaches to understand statistical vs computational gaps. Consider a hypothesis testing problem, where the goal is to distinguish between a {\em planted} model $P$ and a {\em null or reference} model $Q$ . In this paper we consider a problem where we are given $m$ i.i.d. samples $X_1, \dots, X_m \in \R^n$ where either $X_1, \dots, X_m \sim P$ or $X_1, \dots, X_m \sim Q$. The Low-Degree Advantage of degree $k$ is given by 
\begin{equation}\label{eq:LDA}
\LDA^{(m)}_{\le k}(P,Q) = \max_{f: \text{degree-}k \text{ polynomial}}\frac{\left|\E_{P^{\otimes m}}[f(X_1, \dots, X_m)] - \E_{Q^{\otimes m}}[f(X_1, \dots, X_m)]\right|}{\sqrt{\Var_{Q^{\otimes m}}[f(X_1, \dots, X_m)]}}
\end{equation}
Intuitively, this quantity measures how well degree-$k$ polynomials %
of the samples can distinguish between the two distributions. The low-degree advantage also approximates the likelihood ratio between $P$ and $Q$ by restricting to low-degree statistics. 

When the low-degree advantage is small e.g., $\LDA^{(m)}_{\le k}(P,Q) = O(1)$ or even better $o(1)$, it is considered an evidence of average-case hardness~\citep{kunisky2019notes,wein2025computational, Kunisky2025}. In this regime, every degree-$k$ polynomial fails to distinguish between 
$P$ and $Q$, suggesting that no algorithm with comparable computational power can succeed. 
While the low-degree method does not capture Gaussian elimination and certain lattice-based algorithms~\citep{diakonikolas2022non,zadik2022lattice}, these algorithms exploit rich algebraic structure and are brittle to a small amount of random noise. This is unlike low-degree polynomials which are inherently robust to random noise--- a property that is particularly relevant in statistical estimation problems. 

A rich line of work~\citep{barak2019nearly,hopkins2017bayesian,hopkins2017power} led to the low-degree conjecture (Hypothesis 2.1.5 and Conjecture 2.2.4) in the remarkable thesis of Hopkins ~\citep{hopkins2018} (see also Conjecture 1.6 in \citet{DKWB2021}, Conjecture 2.3 in \citet{Moitra2023PreciseER}), which in our setting can be phrased as follows:

\begin{iconj}
For ``natural'' high-dimensional testing problems specified by $m$ i.i.d. samples from $P,Q$, if $\LDA^{(m)}_{\le k}$ remains bounded as $m \to \infty$, then there is no (noise-tolerant) algorithm running in time $\exp(O(k/ \textrm{polylog}(m)))$ that solves the distinguishing problem. %
\end{iconj}

The above conjecture is usually stated in a more general setting where the testing problem is specified by distributions $P_N$ and $Q_N$ with input size $N$; our setting with $m$ i.i.d. samples from a distribution ($P$ or $Q$) over $\R^n$ corresponds to an input size $N=mn$.     
This conjecture is informal since it does not specify a precise class of testing problems for which it is likely to hold. While the original conjecture was stated for a discrete domain (see Conjecture 2.2.4 in \citet{hopkins2018} for a precise version), the low-degree conjecture has been used for a much broader class of high-dimensional statistical problems over both discrete and continuous domains~\citep[see e.g.,][]{DKWB2021, Moitra2023PreciseER, AV2023, Kunisky2025}.  

Conversely it often holds in practice that if $\LDA^{(m)}_{\le k} \to \infty$ as $m,n \to \infty$, then there is typically a noise-tolerant distinguishing algorithm of running time roughly $(nm)^{O(k/ \textrm{polylog}(mn))}$.  
The low-degree advantage seems to capture most existing algorithmic techniques including moment methods, spectral methods, local methods like approximate message passing (AMP), belief propagation, polynomial threshold functions and even certain Sum-of-Squares (SoS) relaxations~\citep{kunisky2019notes, Kunisky2025, wein2025computational, mohantyetal2026}. %
This conjecture has shown remarkable predictive power across a range of canonical problems in high-dimensional statistics including planted clique, sparse PCA, tensor PCA, clustering, spiked Wigner and Wishart models, tensor decomposition, community detection, Gaussian mixture models, certifying restricted isometry property and many others~\citep[see e.g., ][]{hopkins2017power, DKWB2021, Moitra2023PreciseER, buhai2023semirandom, AV2023, diakonikolas2023sq}. See also the excellent surveys by ~\citet{wein2025computational, Kunisky2025}. 

However, a recent exciting result of Buhai, Hsieh, Jain and Kothari~\citep{buhai2025quasi} provides a counterexample to the quasi-polynomial time variant of a concrete low-degree conjecture~\citep{hopkins2018,holmgren2020counterexamples, wein2025computational}, by encoding a problem on list-decoding error-correcting codes into a statistical estimation task. \citet{buhai2025quasi} show that there is no low-degree advantage even for degree $k = n^{\Omega(1)}$, yet there is a quasi-polynomial-time algorithm for the problem. They also provide an example of a certain testing problem (with a single $n \times n$ matrix as input), that can be solved using eigenvalue computation but exhibits vanishing low-degree advantage for $k=\widetilde{O}(n^{1/3})$.    
Despite the counterexample, the low-degree method %
 remains one of the most compelling frameworks for understanding the limits of efficient algorithms in high-dimensional inference. This leads to the following compelling question:

\vspace{5pt}
 \noindent {\bf Question:} {\em Does the Low-Degree Advantage accurately predict the extent of natural algorithmic approaches for high-dimensional statistical problems? Are there algorithmic techniques that are not captured by the LDA method?  
 }

\subsection{Our Result}

Our main contribution is a natural high-dimensional statistical problem where the Low Degree Advantage fails to predict computational tractability. Our problem is in fact a special case of the well-studied {\em robust subspace recovery} problem~\citep{HardtM2013, Lerman_2018, BCPV, bakshi2021list, gao2026ellipsoid}. In the general version of the problem, there is a (unknown) $d$-dimensional subspace $S \subset \R^n$, that we wish to recover. We are given samples $X_1, \dots, X_m \in \R^n$, where an $\alpha$ fraction of the samples are drawn from a distribution that is entirely supported on $S$, while the rest of the samples are drawn from a distribution fully supported on $\R^n$. The robust subspace recovery problem has an elegant polynomial time algorithm for sufficiently small $d$ that leverages sufficient anti-concentration properties of the null distribution~\citep{HardtM2013, BCPV}. In contrast we identify natural distributions $P$ (planted) and $Q$ (null), which fools the 
low-degree method up to degree $k=n^{\Omega(1)}$, and other moment-based approaches. 
We also give an elementary polynomial time algorithm in this setting that achieves high noise tolerance.

We start by describing the hypothesis testing problem. We are given i.i.d. samples from a distribution $\calD$. Our goal is to distinguish the following two hypotheses:

\begin{itemize}
\item \NO: $\calD$ is the rotationally invariant distribution $\Qrot$, given by a {\em scale mixture of spherical Gaussians}. Formally, a sample $X \in \R^n$ is drawn by first $\lambda\sim N(0,1)$ and then $X \sim N(0, \lambda^2 I)$.
\item \YES: %
$\calD$ is a distribution that places at least an $\alpha=1/\poly(n)$ probability mass on a subspace $S \subset \R^n$ of dimension at most $d=O(1)$. %
\end{itemize}

\begin{figure}[h]
    \centering
    \includegraphics[width=0.7\textwidth]{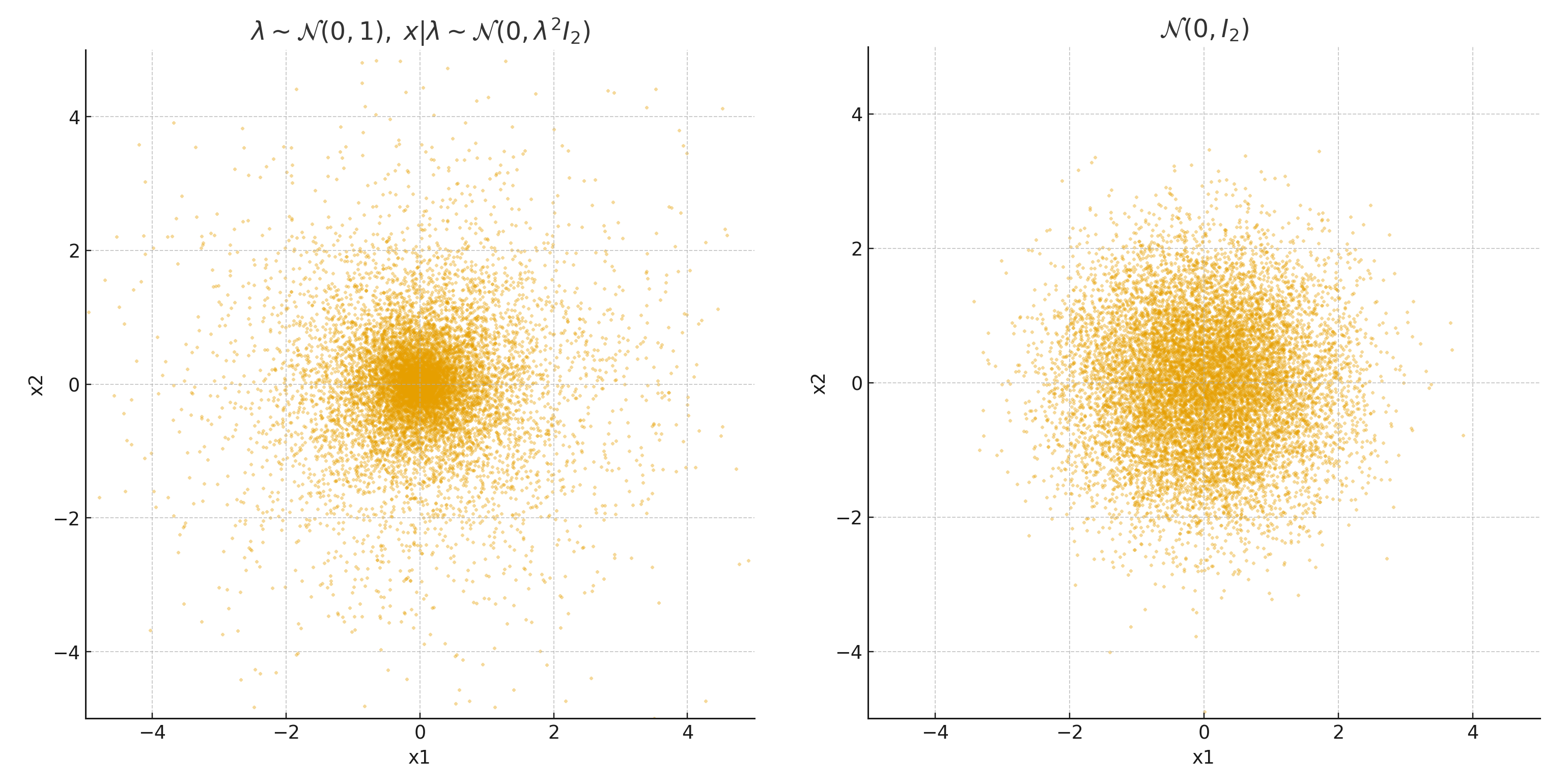}
    \caption{Comparison between a 2D scale mixture of Gaussians and a 2D standard Gaussian. The radial distribution (norm) under the scale mixture is more anti-concentrated.}
    \label{fig:scale-mixture}
    \vspace{-10pt}
\end{figure}

Our planted distribution has a significant mass on a $d$ dimensional subspace $S$; this setting with $d=O(1)$ corresponds to the easiest setting of the robust subspace recovery problem. In fact, the lower-bound result even works when the subspace dimension $d=0$ i.e., $S=\{0\}$.  The null distribution $\Qrot$ is rotationally invariant: there is no low-dimensional subspace which has non-zero probability mass. Moreover, there is an elementary algorithm that distinguishes between the two distributions: take $O(d/\alpha)$ samples, and check to see if there are $d+1$ points lying on a $d$-dimensional subspace!

Our first result shows that the moments of the two distributions $\NULL$ and $\YES$ exactly match up to degree $k=O(\sqrt{\log n/ \log\log n})$. This also implies a low-degree polynomial lower bound up to the same degree.

\begin{theorem}[Moment Matching up to degree $k=\widetilde{O}(\sqrt{\log n})$]\label{ithm:main}
    The two distributions over $\R^n$ given by the \NO~ model and the family of planted distributions $\{\Pv: v \in \mathbb{S}^{n-1}\}$ parametrized by unit vectors $v \in \mathbb{S}^{n-1}$ as described above, satisfy the following properties:
    \begin{enumerate}
        \item $\Qrot$ is rotationally invariant, while $\Pv$ is a mixture: with probability $\alpha = 1/\poly(n)$, $X$ is drawn from a distribution supported on $d=1$ dimensional subspace along $v$; with probability $1-\alpha$, $X$ is drawn from a full-dimensional distribution.
        \item The first $k$ moments of $\Pv$ and $\Qrot$ match for $k=O(\sqrt{\log n /\log\log n})$. Consequently, for any degree-$k$ polynomial $f:\R^n\to \R$,
        \[
        \lvert \E_{X\sim \Pv}[f(X)] - \E_{X\sim Q}[f(X)]\rvert = 0.
        \]       
        \item There is an algorithm that runs in polynomial time (in $n$) and distinguishes between $P_v$ and $\Qrot$ with probability at least $1-o(1)$. Moreover this algorithm is robust, both to adversarial perturbations of the points, and to rerandomizing a $p\in [0,1)$ fraction of the samples from $P_v$ to be drawn from $\Qrot$.
    \end{enumerate}
\end{theorem}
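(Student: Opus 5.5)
We construct $P_v$ directly and reduce moment matching to a one-dimensional problem about scale mixtures. Take $P_v$ to be the law of $X = \lambda\, G$, where $G$ is drawn from a distribution $\mu_v$ on $\R^n$ and $\lambda \sim N(0,1)$ is independent. $\Qrot$ is the special case $G \sim N(0,I)$. We set
\[
\mu_v = \alpha\,\nu_v + (1-\alpha)\,\rho_v .
\]
Here $\nu_v$ is supported on the line $\R v$: draw $g\sim N(0,1)$ and output $g\,c\,v$ for a scalar $c$ to be chosen. The component $\rho_v$ is full-dimensional and is chosen so that $\mu_v$ and $N(0,I)$ agree on all moments up to degree $k$.

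By rotation invariance we may take $v=e_1$. The natural choice is a product form $\rho_v = \rho^{(1)}\otimes N(0,I_{n-1})$, and we would like the whole of $\mu_v$ to have this form. The line component breaks the product structure in coordinates $2,\dots,n$. We therefore instead allow $\rho_v$ to depend only on $(\langle v,x\rangle, \|x_{\perp}\|)$, with a radial scale in the orthogonal directions. Equivalently, we write $X = (\lambda a,\ \lambda b\, Z)$ with $Z\sim N(0,I_{n-1})$ and $(a,b)$ a random pair in $\R\times\R_{\ge 0}$.

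The degree-$\le k$ moments of $X$ are then determined by the joint moments $\E[\lambda^{i+j} a^i b^j]$ for $i+j\le k$ with $j$ even. Averaging over $\lambda$ and $Z$ gives only Gaussian constants, since odd powers vanish by symmetry. Under $\Qrot$ the pair is $(a,b)=(g,1)$. So the moment conditions reduce to a finite system:
\[
\E[a^i b^j] = \E[g^i] \quad \text{for all } i+j\le k,\ j \text{ even}.
\]
The planted atom is $b=0$, occurring with probability $\alpha$. Then $X\in \R v$, which gives part 1.

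The main step is part 2: exhibit a distribution on $(a,b)$ with mass $\alpha = n^{-O(1)}$ at $b=0$ satisfying these $O(k^2)$ moment equalities. We would take $a$ and $b$ independent, with $a\sim N(0,1)$. It then suffices to find a distribution of $b\ge 0$ with $\Pr[b=0]=\alpha$ and $\E[b^{j}]=1$ for all even $j\le k$.

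This matching is impossible with $b$ confined near $1$. The remaining mass must compensate: mass $\alpha$ at $0$ lowers every moment by roughly $\alpha$, so we need mass at large values. We would solve this with a Gauss-quadrature or Carathéodory argument, as follows.

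1. Fix $k$ support points $0<t_1<\dots<t_k$ for $b^2$, spread out.
2. Solve the Vandermonde system that places probabilities $p_i$ on them with prescribed moments $\sum_i p_i t_i^{r} = 1$ for $r=0,\dots,k/2$, after accounting for the $\alpha$ removed at $0$.
3. Show that the solution is nonnegative.

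Nonnegativity follows from a perturbation bound: the correction has size $\alpha\cdot\|V^{-1}\|$. With well-separated nodes, $\|V^{-1}\|\le k^{O(k^2)}$, which is a guessed bound, and we need this to be at most the base weights $k^{-O(k)}$. This is what forces $k^{O(k^2)}\le 1/\alpha = \poly(n)$, that is, $k = O(\sqrt{\log n/\log\log n})$, consistent with the stated degree. Controlling the condition number of this Vandermonde-type system is the main obstacle. We would first try Chebyshev-like nodes and explicit Lagrange-interpolation formulas for the weights.

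Part 3: the algorithm takes $m = \Theta(\log n/\alpha)$ samples and tests whether some two samples are nearly parallel, i.e.\ $|\langle X_i,X_j\rangle| \ge (1-\delta)\|X_i\|\|X_j\|$. Under $P_v$, two samples from the line component exist with probability $1-o(1)$, even after a $p$ fraction is rerandomized, provided $m$ is scaled by $1/(1-p)$. Under $\Qrot$, the directions $X/\|X\|$ are uniform on the sphere because the scale $\lambda$ cancels. For constant $\delta$, a fixed pair is this close with probability $e^{-\Omega(n)}$, so a union bound over $m^2=\poly(n)$ pairs fails only with probability $o(1)$. Small adversarial perturbations change the angles only slightly; this is the anti-concentration property the algorithm uses.
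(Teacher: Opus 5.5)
\textbf{Gap: the moment system in part 2 is infeasible for every $k\ge 4$.} With $\lambda\sim N(0,1)$ independent of $(a,b,Z)$, your own $i=0$ equations say $\E[b^j]=1$ for all even $j\le k$. Concretely, comparing $\E[X_2^2]$ and $\E[X_2^4]$ for an orthogonal coordinate gives $\E[b^2]=\E[b^4]=1$. Hence $\Var(b^2)=0$ and $b^2\equiv 1$ almost surely, which rules out $\Pr[b=0]=\alpha>0$. The independence of $a$ and $b$ plays no role.

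So your step 3 fails for every choice of nodes, and conditioning of the Vandermonde system is not the real obstacle. The target sequence $(1,1,\dots,1)$ is the moment sequence of the point mass $\delta_1$. It lies on the boundary of the moment cone (its Hankel matrix is all-ones, rank one), so removing any mass at $0$ exits the cone. The compensation heuristic fails because mass placed at large values raises $\E[b^4]$ faster than $\E[b^2]$.

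The obstruction is structural. Once $P_v$ is the law of $\lambda G$ with an independent $N(0,1)$ scale, matching the moments of $X$ up to degree $k$ is equivalent to matching the even moments of $G$ with $N(0,I_n)$, whose norm concentrates. For arbitrary $G$, testing against $(\norm{G_\perp}^2-(n-1))^{2r}$ with $4r\le k$ gives $\alpha\le (O(r)/n)^r$. So no $1/\poly(n)$ mass on a line survives once $k=\omega(1)$. You have factored out exactly the anti-concentrated scale that is the reason for using $\Qrot$ rather than a Gaussian.

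\textbf{Missing idea: the planted distribution must change the law of the scale itself.} The paper gives the full-dimensional component a new joint law $\mu_2(\lambda,z)$, with $\bar X\mid\lambda\sim N(0,\lambda^2 I_{n-1})$ and $X_n=z$. Its moments in the $\ell_1\ge 1$ block are $(1+\delta)$ times those of $\nu$, the law of $(\lambda,\lambda g)$. This is feasible because $\E_\nu[\Phi]$ lies strictly inside the moment cone. The paper establishes this by combining:
\begin{enumerate}
\item Carbery--Wright anti-concentration for $p(\lambda,\lambda g)$;
\item the Hermite-expansion bound $|\E[q]|\le C\sqrt{k\Var(q)}$;
\item the two-sided-to-one-sided conversion;
\item a Tukey-depth/Carath\'eodory argument showing $(1+\delta)\E[\Phi]$ is interior for $\delta\le k^{-O(k^2)}$.
\end{enumerate}
The line component $\mu_1$ then absorbs the $\ell_1=0$ block. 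This yields $\alpha_k\ge k^{-O(k^2)}$ and hence $k=O(\sqrt{\log n/\log\log n})$.

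Your one-dimensional quadrature instinct can be rescued if you put the atom in the scale rather than in $G$. Let $X=sg$ with $g\sim N(0,I_n)$, and let $s$ follow the $N$-point Gauss--Hermite rule for $N(0,1)$ with $N$ odd. This rule has nonnegative weights, matches $N(0,1)$ through degree $2N-1$, and has an atom at $0$ of mass $\Theta(N^{-1/2})$. Then $\E[X^\beta]=\E[s^{|\beta|}]\,\E[g^\beta]$ agrees with $\Qrot$ for $|\beta|\le 2N-1$, with that mass at the origin, which lies on $\R v$. Mixing with $\Qrot$ lowers the mass to any smaller $\alpha$. This even gives an atom polynomial in $1/k$.

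Your part 3 (testing for a nearly parallel pair of normalized samples) is the paper's $\sigma_{d+1}$ test with $d=1$ and is fine.
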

See Theorem~\ref{thm:moment-matching} for the formal statement. 
The above lower bound shows that algorithms that rely on statistics of low-degree polynomials on the samples of degree $k=\widetilde{O}(\sqrt{\log n})$\footnote{We use the notation $\widetilde O(f(n))$ to denote $O(f(n)\poly\log(f(n)))$, suppressing polylogarithmic factors of the argument.} cannot distinguish between the two distributions. In fact $\LDA^{(m)}_{\le k} =0$, and it is independent of the number of samples $m$. This rules out many polynomial time algorithms based on moment-based approaches --- for statistical problems in $n$ dimensions, algorithms based on degree $k$ moments typically provide $n^{O(k)}$ guarantees both in terms of sample complexity and running time. However, %
certain special polynomials of degree $\Theta(\log n)$ can be evaluated in polynomial time; this particularly includes algorithms that are based on good approximations to the top eigenvalue of random matrices.  

Our next result shows that the low-degree advantage is small even when the degree $k=n^{\Omega(1)}$. This suggests the absence of any polynomial time (or even quasi-polynomial time) algorithm, thereby making the polynomial time tractability of our problem interesting.
In what follows, $c>0$ can be chosen to be any constant e.g., $c=1/2$. 

\begin{itheorem}[LDA Lower Bound up to degree $k= n^{\Omega(1)}$]\label{ithm:main2}
    Suppose $c>0$ be any constant. There exist universal constants $c_1>0, c_2>0$ such that the following holds for $\alpha = 1/n^{c}$ and $m= n^{c} \log n$: There exist rotational invariant distributions $\Qrot$ and $P$ over $\R^n$ such that 
\begin{enumerate}
    \item $P$ has $\alpha$ probability mass at $X=0$, while $\Qrot$ places zero probability mass on every subspace of dimension at most $n-1$. 
    \item Given $m$ i.i.d. samples from the distribution and any degree $k\le c_1 n^c/\log n$, the low-degree advantage on the $m$ samples $\LDA^{(m)}_{\le k}(P,\Qrot) \le c_2$.
    \item There is an algorithm that runs in polynomial time  (running time $O(mn)$) and distinguishes between $P$ and $\Qrot$ with probability at least $1-o(1)$. Moreover this algorithm is robust both to adversarial perturbations of the points, and to rerandomizing an $\epsilon$ fraction of the samples from $P$ to be drawn from $\Qrot$.   
\end{enumerate} 
\end{itheorem}
The formal statement showing the bounded low-degree advantage is given in Theorem~\ref{thm:LDA}; in fact, Theorem~\ref{thm:LDA} gives a more general setting of parameters $m, \alpha, k$ where the low-degree advantage is bounded.  %
In the above regime of parameters $m \alpha = \Omega( \log n)$: hence w.h.p. there are many points at $X=0$ (or close to it, in the robust version) that can checked easily in polynomial time. The input is of size $N=mn = n^{1+c}$. Hence, this special case of robust subspace recovery represents a natural statistical problem for which the predictions of the low-degree method fail. 

We remark that the original conjecture of \citet{hopkins2018} for discrete problems imposed additional restrictions of symmetry (permutation-invariance) and a product form for the null distribution $Q_N$. In our setting, the null distribution is comprised of $m$ i.i.d. samples from $\Qrot$, which by itself is rotationally invariant: this is a natural symmetry condition for our problem. While $\Qrot$ is not a product distribution, the null distribution for our input has a product form across the $m$ samples (in fact, it is i.i.d.).
Hence the joint distribution becomes closer to a product distribution when $m \gg n$.

\paragraph{Algorithms and Noise Tolerance.} The complementary algorithmic bound for $d=O(1)$ in the above Theorem~\ref{ithm:main2} is achieved by the simple algorithms in Section~\ref{sec:algo}.  The algorithm samples $m=O(1/\alpha)$ points and checks whether there exists $d+1$ samples which are close to being linearly dependent. Note that the planted distribution for our algorithmic result is more general than the specific planted distribution in the lower bound (Theorems~\ref{ithm:main} and \ref{ithm:main2}). See Theorems~\ref{thm:algo} and \ref{thm:algo:additive} for formal statements. 

Moreover, the algorithm is also noise tolerant: it is robust in several models of noise that are natural for the robust subspace recovery problem. First, a $p \in [0,1]$ fraction of the samples drawn from the $\YES$~distribution can be re-randomized to be drawn from the $\NULL~$distribution $\Qrot$, as in many random noise models for other problems. The work of \citep{holmgren2020counterexamples} gave a counterexample based on error-correcting codes with this noise model, and proposed considering robustness to perturbations of each point in the real domain e.g., a small Gaussian perturbation to every entry when the $\NULL~$distribution is a standard spherical Gaussian $N(0,I)$. 

For our $\NULL$~distribution $\Qrot$, there is no obvious choice for a natural average-case noise model as the points have varying length scales. We instead show robustness of the algorithm against two natural notions of adversarial perturbations $\widetilde{X}$ to each point $X$. The first kind are perturbations whose relative error is upper bounded in magnitude by a constant i.e.,  $\|\widetilde{X} - X\|_2 \le \epsilon \|X\|_2$ for $\epsilon = O_d(1)$. The second kind (additive error worst-case perturbations) bounds the magnitude of the $\|\widetilde{X}-X\| \le \eta$ in an absolute sense (and is more meaningful when $S=\{0\}$). The ranges of $\epsilon, \eta$ allowed by the algorithm are within a constant factor of the best possible even for inefficient algorithms in this model.    
Please see Theorem~\ref{thm:algo} (relative error worst-case perturbations) and Theorem~\ref{thm:algo:additive} (additive error worst-case perturbations) in Section~\ref{sec:algo} for formal statements.

\paragraph{Failure of low-degree polynomials in capturing anti-concentration.} Our results together identify a natural candidate setting (robust subspace recovery) where the low-degree polynomial method fails. Perhaps more interestingly, it suggests that the low-degree method does not adequately capture algorithms that are based on {\em anti-concentration properties} of the distribution. This is unlike concentration properties and tail bound which are often captured by high-degree moments of the distribution. The algorithms for robust subspace recovery~\citep{HardtM2013, BCPV} indeed rely on identifying a small but non-trivial $\alpha$ fraction of samples that are located in a very small region --- a well-spread anti-concentrated distribution is unlikely to have any such high density region. In fact, the recent work of \citep{bakshi2024anticoncentration} studies the problem of certifying anti-concentration with a view towards obtaining guarantees for clustering problems for general distributions. They construct low-degree polynomials to certify that there is no small halfspace of small width $\alpha$ that contains $> C \alpha$ probability mass. However, the degree of the certificate grows as $\Omega(1/\alpha^2)$. For robust subspace recovery, we are in the regime where $\alpha = 1/\poly(n)$. This seems to be challenging for such techniques including \citep{bakshi2021list, bakshi2024anticoncentration}.

\paragraph{New candidate instance for separations}

The main contribution of this work is a natural high-dimensional problem (an average-case setting of robust subspace recovery) which may serve as a candidate for proving separation results between different classes of algorithms. We believe that our candidate instance (and variants) could present a challenge to other algorithm classes including the Statistical Query (SQ) model, and the Sum-of-Squares (SoS). Concretely, we conjecture that the average-case robust subspace recovery problem studied in this work (e.g., in Theorem~\ref{ithm:main}, or a generalization with an $\alpha$-mass on a $d$-dimensional subspace, in the computationally tractable range) is also a hard instance for polynomial time algorithms that fit within the SQ framework.    

This conjecture is motivated by two main observations. First, our low-degree lower bound (Theorem~\ref{ithm:main}) shows that the first $k = O(\sqrt{\log n/ \log\log n})$ moments of the single-sample distributions match exactly. This perfect moment-matching provides strong evidence of indistinguishability for any algorithm that relies on low-degree statistics. Connections to low-degree method lower bounds are also known in certain settings~\citep{brennan21a}. %
Second, our simple algorithms and the existing algorithms for robust subspace recovery~\citep{HardtM2013, BCPV} for sub-constant $\alpha$ rely on identifying approximate linear dependence among a relatively small subset of the sampled points. This procedure is not believed to be captured by the SQ model, which is restricted to estimating aggregate statistical properties of the entire population. Moreover such certificates of anti-concentration seem quite challenging even for powerful algorithm classes like sum-of-squares, where current techniques seem to require degree bounds that are $\poly(1/\alpha)$~\citep{bakshi2024anticoncentration}.

\paragraph{Comparison to Non-Gaussian Component Analysis (NGCA).}
Our results differ from typical hardness results in settings like NGCA \citep{JMLR:v7:blanchard06a,diakonikolas2017statistical,diakonikolas2023sq}. The objective in NGCA is typically to find a single hidden non-Gaussian direction in high dimensions. Lower bound constructions based on the NGCA problem and its variants have played a major role in proving evidence of hardness for several average-case problems~\citep{diakonikolas2017statistical, bruna2021continuous}.  For example, recent work \citep{diakonikolas2023sq} shows that a subspace recovery problem is hard for low-degree polynomials and Statistical Query algorithms when the non-Gaussian signal is confined to a subspace of co-dimension $1$ (i.e., an $(n-1)$-dimensional subspace). 
However, in our constructions, we have an $\alpha$ fraction of mass on a $1$-dimensional or $0$-dimensional subspace; this makes an entire subspace of co-dimension $\ge n-1$ to be non-Gaussian. In fact, this different flavor is expected -- while the NGCA is believed to be computationally hard, our construction where the subspace is low-dimensional is algorithmically easy. This makes it significantly more difficult to design a planted distribution that matches all the mixed moments of the reference distribution. We hope that our new instance which feels significantly different from the NGCA instance will be helpful in understanding the comparative power of different algorithmic frameworks. 

\subsection{Technical Overview}

Our main result is a separation, which we establish in two parts. First, we construct a hypothesis testing problem where a family of planted distributions $P$ and a rotationally invariant null distribution $\Qrot$ are indistinguishable by low-degree polynomials. We establish this indistinguishability in two strong forms: exact moment matching up to degree $k=\widetilde O(\sqrt{\log n})$, and a vanishing Low-Degree Advantage for degrees $k= n^{\Omega(1)}$. Both of these results leverage nice structural properties of the distribution $\Qrot$.  
Second, we provide a simple, efficient algorithm that robustly solves the same problem.

\paragraph{The Moment Matching Construction.}
The core of our first lower bound is the moment-matching claim. We first construct our null distribution $Q$ as a scale mixture of Gaussians. This choice is crucial. A standard Gaussian is rotationally invariant, but its length ($\|X\|_2$) is highly concentrated. 
We require our null distribution $Q$ to satisfy two properties:
(1) {\em rotational invariance}, and
(2) {\em anti-concentration of its length (scale)}.

Our chosen $Q$ (a scale mixture with the scale $\lambda \sim \mathcal{N}(0,1)$) satisfies both. Since both $P$ and $Q$ are designed to be rotationally invariant in the $(n-1)$-dimensional subspace irrelevant to the signal, two consequences follow:
1) The null distribution $Q$ provides no information about the hidden subspace.
2) The moment-matching problem simplifies dramatically. We can analyze the moments along any arbitrary vector in the irrelevant subspace, reducing the $n$-dimensional problem to a 2D problem (one signal coordinate, one irrelevant coordinate).

We then focus on matching the 2D moment vectors of $P$ and $Q$. Our strategy is to give a non-constructive proof to show the existence of $P$ by effectively perturbing the moments of $Q$. The main technical challenge is to prove that this new, perturbed set of moments is valid, i.e., a distribution $\mu_2$ defining them actually exists. In other words, we want to show that the moments of $Q$ are sufficiently far from the boundary of the moment polytope. 
To prove existence, we use Carath\'eodory-style geometric bounds. This shows that our perturbed target moment vector is valid if it lies within the convex hull of all possible moment vectors. We prove this by showing it has a positive Tukey depth. This is where the anti-concentration of the scale/length of $Q$ becomes essential, as this property is inherited by the 2D distribution of the scale of $Q$. 

We first establish a strong, two-sided anti-concentration bound, which relies on the Carbery-Wright inequality. Then a crucial step is to translate this two-sided bound into the one-sided bound required to lower-bound the Tukey depth. This is encapsulated in the following lemma. 

\begin{lemma}[Same as Lemma~\ref{lem:anticonc-onesided}]\label{intro:lem:anticonc-onesided}
Suppose $\nu$ is a probability measure over $\R^n$ that satisfies the following anti-concentration statements: for any multivariate polynomial $p:\R^n \to \R$ of degree at most $k$, 
\[
\forall \delta>0, ~\Pr_{x\sim \nu} \Big[\lvert p(x) -\E_\nu[p] \rvert \le \delta \cdot \sqrt{\Var(p(x))} \Big] \le \eta_k(\delta).
\]
Then for any $\delta_1 >0$ such that $\eta_k(\delta_1)\le 1/2$, we have
\[
\Pr_{x \sim \nu}\Big[  p(x) \ge \E_\nu[p(x)]+\delta \cdot \sqrt{\Var(p(x))} \Big] \ge \frac{\delta_1^2}{8} - \eta_k(\delta).
\]
\end{lemma}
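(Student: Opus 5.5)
Throughout, fix a polynomial $p$ of degree at most $k$ with $\Var(p)>0$, and set
\[
Y=\frac{p(x)-\E_\nu[p]}{\sqrt{\Var(p(x))}},
\]
so that $\E[Y]=0$ and $\E[Y^2]=1$. The plan splits the one-sided event into a ``sign'' part and a ``near-zero'' part:
\[
\Pr[Y\ge \delta]\;\ge\;\Pr[Y>0]-\Pr[0<Y<\delta]\;\ge\;\Pr[Y>0]-\eta_k(\delta),
\]
where the last step uses the two-sided hypothesis at scale $\delta$. It therefore remains to prove $q:=\Pr[Y>0]\ge \delta_1^2/8$ from three facts: $\E Y=0$, $\E Y^2=1$, and $\Pr[|Y|\le\delta_1]\le \eta_k(\delta_1)\le 1/2$. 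First I record a free consequence of Chebyshev: since $\Pr[|Y|\le \delta_1]\ge 1-1/\delta_1^2$, the assumption $\eta_k(\delta_1)\le 1/2$ forces $\delta_1\le\sqrt2$. Hence $\delta_1^2/8\le 1/4$, and I may assume $q<1/4$, as otherwise the claim is immediate.

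\textbf{Lower bound on the positive part.} Mean zero gives $\E[Y_+]=\E[Y_-]$. On the negative side, $\Pr[Y<-\delta_1]\ge \Pr[|Y|>\delta_1]-q\ge 1/2-q$. Therefore
\[
\E[Y_+]=\E[Y_-]\ge \delta_1\Pr[Y<-\delta_1]\ge \delta_1(1/2-q).
\]
Note that this uses the mass on the negative tail, which is almost all of the mass away from $0$ when $q$ is small, rather than the cruder symmetric bound $\E|Y|\ge \delta_1/2$.

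\textbf{Upper bound via Cauchy--Schwarz, and closing the argument.} Cauchy--Schwarz gives $\E[Y_+]^2\le \E[Y_+^2]\,q$. In addition, $\E[Y_+^2]=1-\E[Y_-^2]$, and $\E[Y_-^2]$ can be bounded below using the negative tail: $\E[Y_-^2]\ge \E[Y_-]^2/(1-q)$ by Cauchy--Schwarz, and $\E[Y_-^2]\ge \delta_1^2(1/2-q)$ directly. Combining these yields an inequality of the form
\[
\delta_1^2(1/2-q)^2\le q\bigl(1-c(q,\delta_1)\bigr).
\]
I then solve this for $q$ in the regime $q<\delta_1^2/8\le 1/4$ and aim to derive a contradiction, which gives $q\ge \delta_1^2/8$. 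Applying the same argument to $-p$ gives the symmetric statement, although only the stated direction is needed.

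\textbf{Main obstacle: the constant.} The crude chain, using $\E[Y_+^2]\le 1$ and $(1/2-q)^2\ge 1/16$, only yields $q\ge\delta_1^2/16$, so the key step is extracting the extra factor of $2$. My first attempt is to exploit the two-point extremal structure. For fixed $q$, the constraints are most nearly tight when $Y$ takes a value $-a$ with probability $1-q$ and a value $b$ with probability $q$. In that case $a^2=q/(1-q)$, and $a>\delta_1$ forces $q\gtrsim\delta_1^2$, which is far above $\delta_1^2/8$. So I would try to reduce to this configuration: replace $Y_-$ by its conditional mean on $\{Y<0\}$, and replace $Y_+$ by a single atom carrying the same mean. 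This replacement only decreases variance on the negative side, while the positive side absorbs the remaining variance. Bounding the resulting two- or three-atom optimization should give the $\delta_1^2/8$ bound with room to spare. If a clean reduction proves elusive, the fallback is the explicit case analysis on $q$ relative to $\delta_1^2/8$ in the combined inequality above.
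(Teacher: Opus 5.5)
You have the right decomposition: the reduction to $q:=\Pr[Y>0]\ge\delta_1^2/8$, the Chebyshev observation $\delta_1^2\le 2$, and $\E[Y_+]=\E[Y_-]\ge\delta_1\Pr[Y<-\delta_1]\ge\delta_1(\tfrac12-q)$ are all correct. This one-sided bookkeeping is sharper than the paper's argument. The paper combines $\tfrac12\E|Z|\le\sqrt{\Var(Z)}\sqrt{\Pr[Z\ge 0]}$ with $\E|Z|\ge(1-\eta_k(\delta_1))\delta_1\sqrt{\Var(Z)}$. That is essentially your ``crude chain'' and gives only $\delta_1^2/16$; the body version, Lemma~\ref{lem:anticonc-onesided}, states $1/16$. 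So the paper's proof does not deliver the $1/8$ in this statement.

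The gap is that you never actually extract the factor $2$, and the route you try first cannot do it. Replacing $Y_-$ by its conditional mean on $\{Y<0\}$ does not preserve the hypothesis $\Pr[|Y|\le\delta_1]\le\tfrac12$. That hypothesis allows half the mass near $0$, so the conditional mean can lie well inside $[-\delta_1,0]$, and your assumption $a>\delta_1$ is unjustified. Consequently the two-point picture is not extremal, and there is no room to spare. Take mass $\tfrac12$ just below $0$, mass $\tfrac12-q$ just beyond $-\delta_1$, and mass $q$ at $(\tfrac12-q)\delta_1/q$. This has mean $0$ and variance $\approx\delta_1^2(\tfrac12-q)/(2q)$, so $q$ can be as small as about $\delta_1^2/(4+2\delta_1^2)$. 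Its ratio to $\delta_1^2/8$ tends to $1$ as $\delta_1\to\sqrt2$. Here the negative conditional mean is about $-\delta_1/3$.

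The fix is already among your ingredients, but you must use the direct bound $\E[Y_-^2]\ge\delta_1^2 w$, where $w:=\Pr[Y<-\delta_1]\ge\tfrac12-q$. The Cauchy--Schwarz bound $\E[Y_-]^2/(1-q)$ alone is too weak near $\delta_1=\sqrt2$: it gives only $q\ge(3-\sqrt3)/6<\tfrac14$ there. With the direct bound,
\[
\delta_1^2w^2\le\E[Y_+]^2\le q\,\E[Y_+^2]=q\bigl(1-\E[Y_-^2]\bigr)\le q\bigl(1-\delta_1^2 w\bigr),
\]
that is, $\delta_1^2 w(w+q)\le q$. The left side is increasing in $w\ge 0$ and $w+q\ge\tfrac12$. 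Hence $\delta_1^2(\tfrac12-q)\le 2q$, so $q\ge\delta_1^2/(4+2\delta_1^2)\ge\delta_1^2/8$ using $\delta_1^2\le 2$. This is exactly your ``fallback'' with $c(q,\delta_1)=\delta_1^2(\tfrac12-q)$. Once this line is written out, your proof is complete and establishes the $1/8$ constant, which the example above shows is asymptotically sharp.
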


This positive Tukey depth guarantees that our perturbed moment vector is in the interior of the convex hull, which proves the existence of our moment-matching instance up to degree $k=O(\sqrt{\log n/\log\log n})$. These properties of the null distribution $\Qrot$ also help in showing bounded low-degree advantage when for polynomial degree $k$. 

\paragraph{Extending Low-Degree Method Hardness to Polynomial Degrees.}
While our moment-matching construction completely fools polynomials up to degree $k=\widetilde O(\sqrt{\log n})$, standard heuristics suggest that low-degree lower bounds are strong evidence of hardness if they rule out polynomial of degree $k=\omega(\log(n))$. To robustly demonstrate that the low-degree framework fails to predict the tractability of this problem, we establish a bound on LDA for even higher degrees, i.e., $k= n^{\Omega(1)}$. We use the same null distribution $\Qrot$ as in our moment-matching result. Its key structural properties, rotational invariance and strong anti-concentration of its length, remain crucial in proving a bound on the low-degree advantage.

We analyze a simplified planted distribution $P=(1-\alpha)Q+\alpha \Delta_{\bf 0}$, which mixes the null distribution with a point mass at the origin $0$. Using the anti-concentration of the length of $Q$, we first prove that the single-sample advantage $\LDA^{(1)}_{\le k}$ is $O(1)$. A key claim that allows us to prove the bound on the single-sample advantage is the following:

\begin{claim}\label{claim:into:var}
Consider any degree-$k$ polynomial $f$ whose constant term is $0$. Then we have
$$\frac{|\E_Q[f(X)]|}{\sqrt{\Var_Q(f(X))}}\le O(\sqrt{k}),$$
where $Q=\Qrot$ is the rotational invariant scale mixture of Gaussians. 
\end{claim}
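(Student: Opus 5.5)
The plan is to reduce Claim~\ref{claim:into:var} to a one-dimensional statement about the scale variable $\lambda$. We can do this because $\lambda$ has a density bounded away from zero near the origin, and the hypothesis $f(0)=0$ ties the polynomial to its value there. Write $X=\lambda g$ with $\lambda\sim N(0,1)$ and $g\sim N(0,I_n)$ independent. For each fixed $g$, define the univariate polynomial $p_g(\lambda):=f(\lambda g)$. It has degree at most $k$ and satisfies $p_g(0)=f(0)=0$.

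\emph{Step 1 (conditioning).} By the law of total variance, $\Var_Q(f)\ge \E_g\big[\Var_\lambda(p_g(\lambda))\big]$. Also $\E_Q[f]=\E_g\big[\E_\lambda p_g(\lambda)\big]$. Suppose we have a one-dimensional bound $|\E_\lambda p(\lambda)|\le C_k\sqrt{\Var_\lambda(p(\lambda))}$ for every univariate $p$ of degree at most $k$ with $p(0)=0$. Then by Jensen/Cauchy--Schwarz,
\[
|\E_Q f|\le \E_g|\E_\lambda p_g|\le C_k\,\E_g\sqrt{\Var_\lambda p_g}\le C_k\sqrt{\E_g\Var_\lambda p_g}\le C_k\sqrt{\Var_Q f}.
\]
So it suffices to prove the univariate bound with $C_k=O(\sqrt{k})$. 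Notably, this bound is independent of $n$.

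\emph{Step 2 (univariate bound via Hermite expansion).} Set $q:=p-\E_\lambda p$. Then $\E q^2=\Var(p)$, and since $p(0)=0$ we have $q(0)=-\E p$. So we need $|q(0)|^2\le C_k^2\,\E_{\lambda\sim N(0,1)}[q(\lambda)^2]$ for all polynomials $q$ of degree at most $k$. This is a pointwise-evaluation (Christoffel function) bound at $0$ for the Gaussian measure. Expand $q=\sum_{j\le k}a_j \He_j/\sqrt{j!}$ in normalized Hermite polynomials, so that $\E q^2=\sum_j a_j^2$. Cauchy--Schwarz gives
\[
q(0)^2\le \Big(\sum_j a_j^2\Big)\sum_{j\le k}\frac{\He_j(0)^2}{j!}.
\]
Odd Hermite polynomials vanish at $0$. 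For even indices, $\He_{2m}(0)^2/(2m)!=((2m-1)!!)^2/(2m)!=\binom{2m}{m}4^{-m}\le 1/\sqrt{\pi m}$ for $m\ge1$, and the $m=0$ term is $1$. Summing over $m\le k/2$ gives $O(\sqrt{k})$. Hence $C_k=O(k^{1/4})$, which is in particular $O(\sqrt{k})$ as claimed.

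\emph{Main obstacle.} The only conceptual point is Step 1. One must not work with the norm $\|X\|$, whose mass near $0$ degrades with $n$. Instead, one should condition on the direction/Gaussian part $g$ and use only the randomness of $\lambda$, whose law near $0$ is dimension-free. After that, the estimate is the explicit Hermite computation of the Gaussian Christoffel function at the origin, which I expect to be routine.
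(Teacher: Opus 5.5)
Your proof is correct and is essentially the paper's argument. You reduce via the law of total variance to a univariate polynomial in the Gaussian scale $\lambda$ that vanishes at $0$. You then bound $|q(0)|^2$ against $\E q^2$ by Cauchy--Schwarz in the normalized Hermite basis; this is the Christoffel-sum bound \eqref{eq:christoffel-bound} that the paper also discusses after Theorem~\ref{thm:LDA}.

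There are two small differences. First, the paper averages over $g$ instead of conditioning on it: it uses $h(\lambda)=\E_g f(\lambda g)$ and $\Var_Q f\ge\Var_\lambda h$, so no Jensen step is needed. Second, the paper gets $O(\sqrt{k})$ by applying its bivariate Lemma~\ref{lem:mean-var-ratio}, whereas your direct univariate Hermite sum gives the sharper bound $O(k^{1/4})$.
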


We combine this claim with a tensorization argument (in Lemma~\ref{lem:1-to-m-sample}) to lift this to the multi-sample setting, showing that the multi-sample advantage remains bounded by a small constant. 

The bound in the above claim (for a single sample) is independent of the dimension $n$.  We remark that this claim is not true for a spherical Gaussian $N(0,I)$ even though it is rotationally invariant; for example the degree-$2$ polynomial $f(X)=X^\top X$ is concentrated around $n \pm O(\sqrt{n})$. An important property that we exploit about $\Qrot$, we can write $X \sim \Qrot$ as a {\em product} of two independent random variables $X= \lambda g$, where $g \in \R^n$ is a random vector (drawn from a rotationally invariant distribution), and $\lambda$ is a random variable capturing the scale; in our case, it is also drawn according to the normal distribution which has good anti-concentration properties. This allows us to establish the above claim (see Proof of Theorem~\ref{thm:LDA}). We remark that the low-degree lower bound construction also works for $n=1$ dimension (where one can view $g \sim_{u} \{-1,+1\}$). The choice of a normal random variable for $\lambda$ is not crucial; one can use other choices of the scale distribution $\lambda$ that satisfy Claim~\ref{claim:into:var} (potentially with a different dependence on $k$) to achieve other desirable properties.

\paragraph{The Simple Robust Algorithms.}
In contrast to the low-degree hardness, we show that this problem is computationally tractable by presenting two simple algorithms that sample subsets of points and check for approximate linear dependence. For $d$-dimensional hidden subspace, the first algorithm handles adversarial relative perturbations by normalizing samples and detecting if the $(d+1)$-th singular value of any subcollection of size $d+1$ drops below a constant threshold. The second algorithm extends the algorithmic guarantees to the setting of worst-case additive perturbations. Both algorithms rely on the anti-concentration of $\Qrot$ to ensure that random tuples are robustly linearly independent (soundness).

\section{Preliminaries}

\subsection{Lifting Moment Matching to the Multi-Sample Setting}
\paragraph{Moment Matching.} We say two distributions $P$ and $Q$ over $\R^n$ have matching moments up to degree $k$ if for any multivariate monomials $M:\R^n\to\R$ of total degree at most $k$, we have 
\[
\E_{X\sim P}[M(X)] = \E_{X\sim Q}[M(X)].
\]
The definition of LDA involves polynomials over multiple i.i.d. samples from a distribution. The following standard lemma shows that the single-sample moment matching property defined above directly implies that the expectation of any low-degree polynomial over the corresponding multi-sample distributions will also match. This allows us to focus our construction for the low-degree lower bound on a single-sample distribution.
\begin{restatable}{lemma}{momentmatchinglifting}
    Let $P$ and $Q$ be two distributions over $\R^n$ whose moments match up to degree $k$. Let $m$ be the number of samples, and let $P^{\otimes m}$ and $Q^{\otimes m}$ be the product distributions for $m$ i.i.d. samples. Then for any polynomial $f: \R^{m\times n} \to \R$ of total degree at most $k$, we have
    \[
    \E_{(X_1,\dots,X_m)\sim P^{\otimes m}}[f(X_1,\dots,X_m)] = \E_{(Y_1,\dots,Y_m)\sim Q^{\otimes m}}[f(Y_1,\dots,Y_m)].
    \]
\end{restatable}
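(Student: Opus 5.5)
By linearity of expectation, it suffices to prove the identity for a single monomial in the $mn$ variables. Write such a monomial as $M(X_1,\dots,X_m)=\prod_{i=1}^m M_i(X_i)$. Here each $M_i:\R^n\to\R$ is a monomial in the coordinates of the $i$-th sample alone, and the total degrees satisfy $\sum_i \deg M_i = \deg M \le k$. An arbitrary $f$ of total degree at most $k$ is a finite linear combination of such $M$. Both sides of the claimed identity are linear in $f$, so matching on every monomial gives matching on $f$. Throughout, we assume (as implicit in the hypothesis of matching moments) that all moments of degree at most $k$ of $P$ and $Q$ are finite, so that every expectation below is well defined.

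Next, use independence across samples. Under $P^{\otimes m}$ the samples $X_1,\dots,X_m$ are independent, so
\[
\E_{P^{\otimes m}}\Big[\prod_{i=1}^m M_i(X_i)\Big]=\prod_{i=1}^m \E_{X\sim P}[M_i(X)],
\]
and likewise under $Q^{\otimes m}$. Each factor $M_i$ is a monomial on $\R^n$ of degree $\deg M_i\le \deg M\le k$. The moment-matching hypothesis therefore gives $\E_P[M_i]=\E_Q[M_i]$ for each $i$. This also covers the constant monomial ($\deg M_i=0$), for which both expectations equal $1$. Multiplying over $i$ makes the two products agree, which proves the identity for $M$ and hence for $f$.

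The main point to get right is the degree accounting. The total degree of $f$ bounds the degree of each single-sample factor, because the factor degrees sum to $\deg M$ and are nonnegative; this is exactly why single-sample matching up to degree $k$ is enough. The finiteness of these moments is already part of the hypothesis.
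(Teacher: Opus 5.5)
Your proof is correct and matches the paper's argument step for step. Both reduce to monomials by linearity, factor each monomial into per-sample monomials of degree at most $k$, split the expectation using independence of the samples, and then apply single-sample moment matching to each factor.
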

\begin{proof}
By linearity of expectation, it suffices to prove the equality for any monomial $M(X_1,\dots,X_m)$ of total degree at most $k$. We can write any such monomial $M$ as a product of per-sample monomials: 
\[
M(X_1,\dots,X_m) = \prod_{i=1}^m M_i(X_i)
\]
where the total degree is $\sum_{i=1}^m\deg(M_i)\le k$ and hence each $\deg(M_i)\le k$.
By independence of the samples,
\[
\E_{P^{\otimes m}}[M] = \E_{P^{\otimes m}}\left[\prod_{i=1}^m M_{i}(X_i)\right]
= \prod_{i=1}^m \E_{X_i\sim P}[M_{i}(X_i)].
\]
By the moment matching assumption,
\[
\prod_{i=1}^m \E_{X_i\sim P}[M_{i}(X_i)] = \prod_{i=1}^m \E_{Y_i\sim Q}[M_{i}(Y_i)] = \E_{Q^{\otimes m}}[M].
\]
This proves the claim.
\end{proof}
\subsection{Hermite Polynomials}\label{sec:hermite-polynomials}
We use the standard probabilist's Hermite polynomials, denoted by $\He_m(x)$. For our analysis, we will use the normalized Hermite polynomials, defined as:
    \[
    h_m(x) = \frac{\He_m(x)}{\sqrt{m!}}.
    \]
    The most important property of these normalized polynomials is that they form a complete orthonormal basis for the space of square-integrable functions with respect to the standard Gaussian measure $\mathcal{N}(0,1)$, i.e., for any $i,j\ge 0$,
    \[
    \E_{x\sim\mathcal{N}(0,1)}[h_i(x)h_j(x)]=
    \begin{cases}
    1, & i=j,\\
    0, & i\neq j.
    \end{cases}
    \]
    This orthonormality also extends to higher dimensions. For reference, these normalized Hermite polynomials also have the following standard identity, which relates them to the monomial basis:
    \begin{equation}\label{eqn:hermite-to-monomial}
        h_m(x) = \sum_{j=0}^{\lfloor \frac{m}{2}\rfloor} a_{m,j}x^{m-2j}, ~~~a_{m,j} = \frac{(-1)^j\sqrt{m!}}{2^j j!(m-2j)!}.
    \end{equation}

\section{Moment Matching Instance}\label{sec:moment-matching}
In this section, our goal is to build a planted distribution whose first $k$ moments agree with those of a rotationally invariant null distribution 
$Q$, while still hiding a 1-dimensional signal direction. Concretely, we use the scale parameter $\lambda$ to control the overall spherical spread and a scalar 
$z$ to carry the signal in one coordinate; the construction lets us tune a small mass $\alpha_k$ that lives on a 1-dimensional subspace while matching all mixed moments up to degree $k$.
We also prove that the moments vectors of our planted and null distributions match exactly using the tools that are introduced in Section~\ref{sec:anticonc}. 
\begin{defn}[Scale mixture of Gaussians]
\label{def:scale-GMM}
An $n$-dimensional scale mixture of Gaussians is a distribution of $X$ on $\R^n$ where its scale parameter $\lambda$ is governed by a distribution $\mu$:
\[
\lambda\sim \mu;\; X\mid \lambda\sim\mathcal{N}(0,\lambda^2 I_{n}).
\]
\end{defn}

\begin{restatable}{theorem}{momentmatching}
\label{thm:moment-matching}
    Let $Q$ be a scale mixture of Gaussians with Gaussian scales $\mu=\mathcal{N}(0,1)$ as in Definition~\ref{def:scale-GMM}.
    For any positive integer $k$, there exist $\alpha_k\in (0,1)$ satisfying $\alpha_k\ge k^{-O(k^2)}$, and distributions $\mu_1(z),\mu_2(\lambda,z)$ that define a new distribution $P$, which 
    has its first $k$ moments matching those of $Q$. $P$ is defined as follows for $X=(\bar X, X_n)$ over $\R^n$, where $\bar X\in\R^{n-1},X_n\in\R$:
    \begin{itemize}
        \item with probability $\alpha_k$, $\bar X=0, X_n\sim\mu_1$,
        \item with probability $1-\alpha_k$, $(\lambda,z)\sim\mu_2; \bar X \mid \lambda\sim \mathcal{N}(0,\lambda^2 I_{n-1}), X_n=z$.
    \end{itemize}
\end{restatable}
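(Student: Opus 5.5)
The plan is to reduce everything to a statement about a two-dimensional moment vector, and then show that this vector lies robustly inside the moment body.

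\textbf{Reduction.} Take a monomial $M(X)=\bar X^{a}X_n^{b}$ with $|a|+b\le k$.

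Under $Q$, write $X=\lambda g$ with $g\sim\mathcal N(0,I_n)$. Then
\[
\E_Q[M]=c_a\,\E[\lambda^{|a|}(\lambda g_n)^b],
\qquad c_a=\E[\bar g^{a}],
\]
where $c_a$ depends only on $a$ (and vanishes unless every coordinate of $a$ is even).

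Under $P$, the planted component has $\bar X=0$, so it contributes only when $a=0$, and then it contributes $\alpha_k\E_{\mu_1}[z^b]$. The bulk component contributes $(1-\alpha_k)c_a\E_{\mu_2}[\lambda^{|a|}z^b]$.

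It therefore suffices to find $\mu_1$ and $\mu_2$ such that, for all $j+b\le k$,
\[
(1-\alpha_k)\,\E_{\mu_2}[\lambda^j z^b]=\E_\nu[\lambda^j z^b]-\alpha_k\mathbbm 1[j=0]\,\E_{\mu_1}[z^b].
\]
Here $\nu$ is the law of $(\lambda,\lambda g_n)$. I will take $\mu_1$ to be the law of $X_n$ under $Q$, i.e.\ the one-dimensional scale mixture. Define the target vector
\[
y=m_\nu+\tfrac{\alpha_k}{1-\alpha_k}\,\Delta,
\]
where $m_\nu=(\E_\nu[\lambda^jz^b])_{j+b\le k}$ and $\Delta=m_\nu-(\mathbbm 1[j=0]\E_{\mu_1}[z^b])$. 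Every entry of $\Delta$ is a Gaussian moment of degree $\le 2k$, so $\|\Delta\|\le k^{O(k)}$. By Carath\'eodory, a (finitely supported) probability measure $\mu_2$ with moment vector $y$ exists as soon as $y$ lies in the convex hull of the moment curve $\{(\lambda^jz^b)_{j+b\le k}\}$. The entry with $j=b=0$ equals $1$ on both sides, so that coordinate is automatically consistent.

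\textbf{Positive Tukey depth.} By the separating hyperplane theorem, $y$ is in the hull if for every nonzero direction $w$ there are points of the curve with $\langle w,\cdot\rangle>\langle w,y\rangle$. Write $p_w(\lambda,z)=\sum w_{jb}\lambda^jz^b$ with the constant term dropped. The condition then reads: with positive $\nu$-probability,
\[
p_w\ge \E_\nu p_w+\tfrac{\alpha_k}{1-\alpha_k}\langle w,\Delta\rangle .
\]
Now $p_w(\lambda,\lambda g_n)$ is a polynomial of degree $\le 2k$ in the Gaussian vector $(\lambda,g_n)$. Carbery--Wright therefore gives
\[
\Pr\big[|p_w-\E p_w|\le\delta\sqrt{\Var p_w}\big]\le C k\,\delta^{1/(2k)}=:\eta_k(\delta).
\]
Applying Lemma~\ref{intro:lem:anticonc-onesided} with $\delta_1=(1/(2Ck))^{2k}$ yields
\[
\Pr\big[p_w\ge\E p_w+\delta\sqrt{\Var p_w}\big]\ge \delta_1^2/8-\eta_k(\delta)>0
\]
for a suitable $\delta=k^{-O(k^2)}$. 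This step needs no lower bound on the variance, but choosing $\delta$ this small is exactly what forces the bound $\alpha_k\ge k^{-O(k^2)}$.

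\textbf{Comparing the perturbation to the variance (main obstacle).} It remains to show that
\[
\tfrac{\alpha_k}{1-\alpha_k}\,|\langle w,\Delta\rangle|\le\delta\sqrt{\Var_\nu p_w}\quad\text{for all } w,
\]
which requires a uniform lower bound $\Var_\nu(p_w)\ge k^{-O(k)}\|w\|^2$. The argument I would try first goes through the substitution $q(\lambda,g)=p_w(\lambda,\lambda g)$:
\begin{itemize}
\item The monomial $\lambda^jz^b$ maps to $\lambda^{j+b}g^b$, and distinct pairs $(j,b)$ give distinct monomials, so the monomial coefficient vector of $q$ has norm exactly $\|w\|$.
\item Expand $q$ in the orthonormal Hermite basis of Section~\ref{sec:hermite-polynomials}. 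The change of basis \eqref{eqn:hermite-to-monomial} is triangular with entries bounded by $k^{O(k)}$, so its inverse also has norm $k^{O(k)}$.
\item Since the constant term was removed, no single monomial coefficient can be cancelled by the constant Hermite component. Hence the nonconstant Hermite mass, which equals $\Var(q)$, is at least $k^{-O(k)}\|w\|^2$.
\end{itemize}
I expect the triangular-inversion and cancellation details in this step to need the most care.

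With the variance bound in hand, take
\[
\alpha_k=\delta\,k^{-O(k)}=k^{-O(k^2)}.
\]
Then $y$ has positive Tukey depth with respect to $\nu$, so $y\in\conv$ of the moment curve, and $\mu_2$ exists. This completes the construction of $P$.
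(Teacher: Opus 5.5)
Your argument is correct, but its key step differs from the paper's.

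\textbf{The paper's route.} The paper perturbs along the mean. Its target is $(1+\delta)\E_\nu[\Phi]$ on the non-constant coordinates, so in a direction $\beta$ the required shift is exactly $\delta\,\E_\nu[p_\beta]$. It controls this shift with the mean-to-deviation bound $|\E p|\le C\sqrt{k\Var(p)}$ for polynomials with no constant term (Lemma~\ref{lem:mean-var-ratio}). It therefore never needs a lower bound on $\Var(p_\beta)$ in terms of $\|\beta\|$. The price is that the $\ell_1=0$ block can move only within the unquantified radius $r$ of Lemma~\ref{lem:depth-interior}. So $\mu_1$ must have tiny moments; a point mass at $0$ is the honest choice.

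\textbf{Your route.} You take $\mu_1$ to be the natural one-dimensional scale mixture. Then $\Delta$ vanishes on the $j=0$ block and is not parallel to the mean. You pay for this with a uniform bound $\Var_\nu(p_w)\ge k^{-O(k)}\|w\|^2$. That bound holds, but state its direction correctly.
\begin{itemize}
\item Since $h_0(\lambda)h_0(g)=1$ affects only the constant monomial, the non-constant monomial coefficients of $q$ equal $T\gamma_{\neq 0}$. Here $T$ is the Hermite-to-monomial matrix of \eqref{eqn:hermite-to-monomial}, and $\gamma_{\neq 0}$ are the non-constant Hermite coefficients.
\item Hence $\|w_{\neq 0}\|\le \|T\|\sqrt{\Var(q)}$, with $\|T\|\le (k+1)^2\,k!$.
\item What you need is a bound on $T$ itself, not on its inverse. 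In any case, ``triangular with bounded entries'' would not by itself bound an inverse.
\item Removing the constant term plays no role in this step.
\end{itemize}

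\textbf{What each buys.} Your approach gives an explicit inradius $\delta k^{-O(k)}$ around $\E_\nu[\Phi]$, i.e.\ a quantitative replacement for Lemma~\ref{lem:depth-interior}. It therefore accommodates any $\mu_1$ with moments $k^{O(k)}$, at no cost to the final $\alpha_k=k^{-O(k^2)}$. The paper's approach is shorter and reuses the ratio lemma that Theorem~\ref{thm:LDA} needs anyway.

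\textbf{Two small repairs.} First, drop the constant coordinate before invoking the separating hyperplane, i.e.\ work in the affine hyperplane where it equals $1$. As written, $w=e_{00}$ can never give a strict inequality, and the variance bound should be stated with $\|w_{\neq 0}\|$. Second, to pass from ``$\ge$ with positive probability'' to ``$>$'', note that level sets of a non-constant polynomial are $\nu$-null.
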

Our proof strategy is to first establish the moment-matching conditions, which, by rotational invariance, simplifies our task to matching the mixed moments of just two coordinates. We will show that the moments of our planted distribution $P$ can be expressed as a convex combination of two components. The main technical challenge is to prove that the planted distribution, defined by distributions $\mu_1$ and $\mu_2$, can be constructed to satisfy the resulting system of moment equations. To do this, we will reframe the problem as finding a target moment vector $\theta$ within the convex hull of the support of a related moment vector $\Phi$. We then leverage the powerful tools related to anti-concentration and the geometry of moment spaces developed in Section~\ref{sec:anticonc} to prove that such a distribution $\mu_2$ must exist.

\begin{proof}[Proof of Theorem~\ref{thm:moment-matching}]
    We first establish the moment-matching conditions. We want to show that for any unit vector $v=(\bar v, v_n), \bar v\in\R^{n-1},v_n\in\R$ and for any $\ell\in[k]$, 
    \[
    \E_{X\sim P}[\langle X, v\rangle^\ell]=\E_{X\sim Q}[\langle X, v\rangle^\ell].
    \]
    Expanding the inner product gives:
    \[
    \E[\langle X, v\rangle^\ell] = \E\left[(\bar X\cdot \bar v+ X_n v_n)^\ell\right] = \sum_{\ell_1,\ell_2\ge0,\ell_1+\ell_2=\ell} \binom{\ell}{\ell_1}\E[(\bar X\cdot \bar v)^{\ell_1}(X_n v_n)^{\ell_2}].
    \]
    By definition, $Q$ is a mixture of zero-centered Gaussians so it is rotationally invariant. The marginal of the first $n-1$ coordinates of $P$ is also 
    rotationally invariant, 
    so it suffices to show the equality for any $\bar v$. Without loss of generality, we may take $\bar v=e_1$.
    Therefore, it is equivalent to show that for any $\ell_1,\ell_2\ge0$ such that $\ell_1+\ell_2\le k$,
    \begin{equation}\label{eqn:moment-matching-term}
    \E_P[x_1^{\ell_1}x_n^{\ell_2}] =\E_Q[x_1^{\ell_1}x_n^{\ell_2}].
    \end{equation}
    Furthermore, both distributions are centrally symmetric. This implies that any odd moment is zero. Thus, we only need to match the moments where both $\ell_1$ and $\ell_2$ are even. The RHS of (\ref{eqn:moment-matching-term}) is 
    \[   \E_Q[x_1^{\ell_1}x_n^{\ell_2}]=\E_{\lambda\sim\mu}\left[\E_{x\sim\mathcal{N}(0,\lambda^2I)}[x_1^{\ell_1}x_n^{\ell_2}]\right]
    =\E_{\lambda\sim\mu}\left[c_{\ell_1}c_{\ell_2}\lambda^{\ell_1+\ell_2}\right]=c_{\ell_1}c_{\ell_2}c_{\ell_1+\ell_2}
    \]
    where $c_{\ell}$ is the $\ell$-th moment of the standard Gaussian $\mathcal{N}(0,1)$. When $\ell_1=0$, we have the LHS of (\ref{eqn:moment-matching-term}) being
    \begin{align*}
    \E_P[x_1^{\ell_1}x_n^{\ell_2}]
    =\alpha_k \E_{z\sim\mu_1}[z^{\ell_2}]+(1-\alpha_k)\E_{z\sim\mu_2}[z^{\ell_2}].
    \end{align*}
    When $\ell_1\neq0$, we have
    \[
    \E_P[x_1^{\ell_1}x_n^{\ell_2}]= (1-\alpha_k)\E_{(\lambda,z)\sim\mu_2}\left[\E_{x_1\sim\mathcal{N}(0,\lambda^2)}[x_1^{\ell_1}z^{\ell_2}]\right]=(1-\alpha_k)c_{\ell_1}\E_{(\lambda,z)\sim\mu_2}[\lambda^{\ell_1}z^{\ell_2}].
    \]
    Let $\mathcal I=\{(\ell_1,\ell_2):\ell_1,\ell_2 
    \text{ even}, 0<\ell_1+\ell_2\le k\}$ and $D=|\mathcal I|$. Let $\nu(\lambda,z)$ be the distribution defined in Section~\ref{sec:anticonc}. Define random variable $\Phi(\lambda,z)\in \R^D$ with coordinates
    \[
    \Phi_{\ell_1,\ell_2}(\lambda,z)=\lambda^{\ell_1} z^{\ell_2}.
    \]
   Then $\E_\nu[\Phi_{\ell_1,\ell_2}(\lambda,z)]=c_{\ell_2}c_{\ell_1+\ell_2}$. Define our target vector $\theta\in \R^D$
    \[
    \theta_{\ell_1,\ell_2}=\E_{(\lambda,z)\sim\mu_2}[\lambda^{\ell_1}z^{\ell_2}]=
    \begin{cases}
    \frac{c_{\ell_2}^2-\alpha_k\E_{z\sim\mu_1}[z^{\ell_2}]}{1-\alpha_k}, &\ell_1 = 0,\\
    \frac{c_{\ell_2}c_{\ell_1+\ell_2}}{1-\alpha_k}, &\ell_1\ge 1.
    \end{cases}
    \]
    Let $\beta\in \R^D$ be arbitrary vector of coefficients with $\norm{\beta}=1$. Then 
    \[
    p_\beta(\lambda,z) := \langle \beta, \Phi\rangle = \sum_{\mathcal I}\beta_{\ell_1,\ell_2}\lambda^{\ell_1} z^{\ell_2}.
    \]
    and $\E_\nu[p_\beta] = \sum_{\mathcal I}\beta_{\ell_1,\ell_2}c_{\ell_2}c_{\ell_1+\ell_2}$. Let $\theta'\in \R^D$ be
    \[
    \theta'_{\ell_1,\ell_2}=(1+\delta)c_{\ell_2}c_{\ell_1+\ell_2}.
    \]
    So we have $\langle \beta,\theta'\rangle=(1+\delta)\E_\nu[p_{\beta}(\lambda,z)]$.
    By Corollary~\ref{coro:tukey-depth}, we know for any $\delta\le 1/k^{Ck^2}$,
    \[
    h_\Phi(\theta') = \inf_{\beta:\norm{\beta}=1}\Pr_\Phi[\langle \beta, \Phi-\theta'\rangle\ge 0]= \inf_{\beta:\norm{\beta}=1}\Pr[p_\beta\ge (1+\delta)\E_\nu[p]]\ge\eta_k(\delta)>0.
    \]
    By Theorem~\ref{thm:lyons}, $\theta'$ is in the convex set of moment vectors $\Phi(\lambda,z)$ of random $(\lambda,z)\sim\nu$. By Lemma~\ref{lem:depth-interior}, pick $r=r(h_\Phi(\theta'))>0$ so that $B(\theta',r)\subseteq \mathrm{conv}(\mathrm{supp}(\Phi))$.
    Set $\frac{1}{1-\alpha_k}=1+\delta$. For the $\ell_1\ge 1$ coordinates, $\theta'_{\ell_1,\ell_2}=\theta_{\ell_1,\ell_2}$. For $\ell_1=0$, $\theta_{0,\ell_2} = \theta'_{0,\ell_2}-(1+\delta)\alpha_k\E_{z\sim\mu_1}[z^{\ell_2}]$. Choose $\mu_1$ so that $\E_{z\sim\mu_1}[z^{\ell_2}]$ moves the $\ell_1=0$ block by at most $r$ and hence $\theta \subseteq B(\theta',r)\subseteq \mathrm{conv}(\mathrm{supp}(\Phi))$. Therefore, there exists a valid distribution $\mu_2$ on $(\lambda,z)$ with $\E_{\mu_2}[\Phi]=\theta$, i.e., all required mixed moments match as in (\ref{eqn:moment-matching-term}).
\end{proof}

\subsection{Anticoncentration and Tukey Depth}\label{sec:anticonc}
As outlined in the proof of Theorem~\ref{thm:moment-matching}, our moment-matching construction hinges on proving the existence of a specific distribution $\mu_2$. To do this, we must show that a particular target vector of moments $\theta$ lies within the convex hull of the support of a random moment vector $\Phi(\lambda,z)$. This section develops the necessary anti-concentration tools to prove this claim.

We begin by defining a two-dimensional distribution $\nu$ over $(\lambda,z)$, which will serve as the basis for the random moment vector $\Phi(\lambda,z)$ used in Section~\ref{sec:moment-matching}.
Consider the joint distribution $\nu$ over $\{(\lambda,z): \lambda, z \in \R\}$ defined as follows:
\begin{enumerate}
    \item $\lambda \in \R$ is drawn from a standard Gaussian $\mathcal{N}(0,1)$
    \item Conditioned on $\lambda$, $z \in \R$ is drawn from $\mathcal{N}(0,\lambda^2)$. 
\end{enumerate}
In other words, the joint distribution is given by
\begin{align}\label{eq:mu:ref}
\nu(\lambda,z) = \frac{1}{\sqrt{2\pi}} e^{-\lambda^2/2} \cdot \frac{1}{|\lambda| \sqrt{2\pi}} e^{-z^2/(2\lambda^2)}. 
\end{align}

The reference or null distribution over $\R^n$ will be constructed based on the above two-dimensional measure $\nu$. The parameter $\lambda$ will determine the scale of the distribution, and $z$ will determine the component along a special one-dimensional projection.

\subsection{Anticoncentration Bounds}

The following anti-concentration property of the distribution $\nu$ will be important for the lower bound construction. It ensures that no single value (or narrow band) has too much probability mass under any low-degree polynomial of our reference distribution. This property will later be used in Section~\ref{sec:tukey-depth} to establish a positive Tukey depth for our moment vector, which is the key to proving its mean is in the interior of the convex hull of all valid moment vectors. 

\begin{restatable}{lemma}{anticonc}
\label{lem:anticonc}
Suppose $k \in \mathbb{N}$. Consider the degree $k$ polynomial 
\[
p(\lambda,z)= \sum_{\ell_1,\ell_2\text{ even}, 0<\ell_1+\ell_2\le k} \beta_{\ell_1, \ell_2} \lambda^{\ell_1} z^{\ell_2}
\]
where $\beta$ is the vector of coefficients $\beta=\big(\beta_{\ell_1, \ell_2}: \ell_1, \ell_2 \in \{0,1,\dots,k\} \text{ s.t. } \ell_1 + \ell_2 \le k \big)$. Then there exists a universal constant $C$ such that  
\begin{equation}\label{eq:muref:anticonc}
\forall \delta>0, ~\sup_{t \in \R} \Pr_{(\lambda,z)\sim \nu} \Big[\big|p_\beta(\lambda,z) -t \big| \le \delta \cdot |\E[p_\beta]| \Big] \le C \cdot k\cdot \Big(\sqrt{k}\delta\Big)^{\frac{1}{2k}}.
\end{equation}

\end{restatable}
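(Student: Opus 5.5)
We will reduce to Carbery--Wright applied to a polynomial in independent standard Gaussians. Write $z=\lambda g$ with $\lambda,g\sim\mathcal N(0,1)$ independent; this reproduces $\nu$. Then
\[
q(\lambda,g):=p_\beta(\lambda,\lambda g)=\sum_{\mathcal I}\beta_{\ell_1,\ell_2}\lambda^{\ell_1+\ell_2}g^{\ell_2}
\]
is a polynomial of degree at most $2k$ in the Gaussian vector $(\lambda,g)\sim\mathcal N(0,I_2)$. Carbery--Wright (for log-concave measures) gives, for every $t$ and $\epsilon>0$,
\[
\Pr\big[|q-t|\le \epsilon\big]\le C\cdot 2k\cdot\big(\epsilon/\sqrt{\E[q^2]}\big)^{1/(2k)} .
\]
We will take $\epsilon=\delta|\E[p_\beta]|$. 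The exponent $1/(2k)$ and the prefactor $Ck$ then already match the target bound.

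It remains to compare $|\E[p_\beta]|$ with $\sqrt{\E[q^2]}$. By Cauchy--Schwarz, $|\E q|\le\sqrt{\E q^2}$, so the ratio is at most $\delta$. This gives a bound $Ck\,\delta^{1/(2k)}$, which is stronger than $Ck(\sqrt k\,\delta)^{1/(2k)}$ since $k\ge1$.

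The extra $\sqrt k$ in the statement suggests the authors instead normalize by a hypercontractive or variance-type quantity. For example, they may compare against $\sqrt{\Var}$ or an $L^1$ norm via $\|q\|_2\le$ (const)$^{k}\|q\|_1$, or a convention in which Carbery--Wright is stated with $\E|q|$ or with $\sqrt{\Var(q)}$. If our version of Carbery--Wright is normalized by $\E|q|$, we will use $|\E q|\le\E|q|$ directly. Either way, the required inequality follows with slack.

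The only genuine care points are these. First, the substitution $z=\lambda g$ must be distributionally exact; it is, since $z\mid\lambda\sim\mathcal N(0,\lambda^2)$. Second, we need to track the degree doubling from $k$ to $2k$, which only affects constants absorbed into $C$. Third, there is a degenerate case: if $\E[p_\beta]=0$, the event is $\{q=t\}$, which has probability $0$ for nonconstant $q$. If $q$ is constant, then $q\equiv0$ because there is no constant term ($0<\ell_1+\ell_2$ is forced). In that case $\beta$ yields $p_\beta\equiv0$ on the support, and the bound is read with the convention $\E p=0$, $\delta\cdot0=0$. This only arises for $\beta=0$, since the monomials $\lambda^{a}g^{b}$ are linearly independent and the map $(\ell_1,\ell_2)\mapsto(\ell_1+\ell_2,\ell_2)$ is injective.

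The main obstacle is invoking the correct form of Carbery--Wright. It must apply to Gaussian measure on $\R^2$ with degree $2k$, with its normalization by the second moment rather than the variance. Everything else is routine.
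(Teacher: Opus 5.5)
\textbf{There is a genuine gap in how you invoke Carbery--Wright.} You claim that for every $t$, $\Pr[|q-t|\le\epsilon]\le C\cdot 2k\,(\epsilon/\sqrt{\E[q^2]})^{1/(2k)}$. This is not a valid form of the theorem. Applied to the polynomial $q-t$, Carbery--Wright normalizes by $\|q-t\|_{L^2}$ (or by $\E|q-t|$). Since $\inf_t \E[(q-t)^2]=\Var(q)$, attained at $t=\E q$, a bound uniform in $t$ is only available with $\sqrt{\Var(q)}$ in the denominator. That is exactly how the paper states Theorem~\ref{thm:carbery-wright}.

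Your version is false in general. Take $q(x)=1+\eta x$ with $x\sim\mathcal N(0,1)$, $t=1$ and $\epsilon=\eta s$. Then $\Pr[|q-1|\le\eta s]=\Pr[|x|\le s]\approx s$, while your bound gives $O(\eta s)$, which fails as $\eta\to 0$. So the Cauchy--Schwarz comparison $|\E q|\le\sqrt{\E q^2}$ is the wrong one. What the argument needs is an upper bound on $|\E p_\beta|$ in terms of $\sqrt{\Var(q)}$. None of the normalizations you list supplies this, so ``either way'' fails precisely for the variance normalization that the $\sup_t$ forces.

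\textbf{The missing ingredient is the paper's Lemma~\ref{lem:mean-var-ratio}.} It says that for a polynomial in $(\lambda,g)\sim\mathcal N(0,I_2)$ with zero constant term, $|\E q|\le C\sqrt{k\,\Var(q)}$ (here with $2k$ in place of $k$, since $\deg q\le 2k$). The proof goes as follows.
\begin{itemize}
\item Expand $q$ in the normalized Hermite basis, so $\E q=\gamma_{0,0}$ and $\Var(q)=\sum_{(i_1,i_2)\ne 0}\gamma_{i_1,i_2}^2$.
\item The vanishing constant term expresses $\gamma_{0,0}$ as a linear combination of the other coefficients, with weights $a_{i_1,i_1/2}a_{i_2,i_2/2}$.
\item Cauchy--Schwarz, together with $a_{i,i/2}^2\sim\sqrt{2/(\pi i)}$, bounds the ratio by $O(\sqrt k)$.
\end{itemize}
This is where the $\sqrt k$ in the statement comes from. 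It is also where the hypothesis $\ell_1+\ell_2>0$ is genuinely used; you only used it for the degenerate case.

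With this lemma, your substitution $z=\lambda g$ and the degree-$2k$ bookkeeping go through exactly as in the paper:
\[
\Pr\big[|q-t|\le\delta|\E q|\big]\le\Pr\big[|q-t|\le C\delta\sqrt{2k\,\Var(q)}\big]\le C'k(\sqrt k\,\delta)^{1/(2k)}.
\]
Since $k^{1/(4k)}$ is bounded by an absolute constant, this is the same as $C''k\,\delta^{1/(2k)}$. So the $\sqrt k$ is cosmetic in the final bound, but the mean--variance comparison that produces it cannot be skipped.
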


The proof of Lemma~\ref{lem:anticonc} is based on the following Carbery-Wright anti-concentration inequality for Gaussian distributions.
\begin{theorem}[Carbery-Wright, \cite{carbery2001}]
\label{thm:carbery-wright}
Let $p:\R^n \to \R$ be a real-valued polynomial of total degree $k$, and let $\nu$ be a log-concave probability measure on $\R^n$. There exists a universal constant $C$ such that for any $\delta > 0$,
 \[
 \sup_{t \in \R} \Pr_{x \sim \nu} \left[ |p(x) - t| \le \delta \sqrt{ \mathrm{Var}(p(x))} \right] \le C \cdot k \cdot \delta^{1/k}.    
\]
\end{theorem}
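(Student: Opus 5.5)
The result as stated is the Carbery–Wright inequality, which we import from \cite{carbery2001}. For completeness, here is the route we would take to re-derive it. Replace $p$ by $p-t$. Since $\E[(p-t)^2]\ge \Var(p)$ for every $t$, the supremum over $t$ is handled once we prove a bound that uses only an $L^q$ norm of $p-t$ and not its mean. Concretely, we aim for
\[
\Pr_{x\sim\nu}\big[|q(x)|\le \epsilon\big]\le C\,k\,\epsilon^{1/k}\big(\E_\nu|q|^{1/k}\big)^{-1}
\]
for every polynomial $q$ of degree $k$ and every log-concave $\nu$. We then compare $\E|q|^{1/k}$ with $(\E q^2)^{1/2k}$ using the reverse H\"older (Khinchin–Kahane type) inequality for polynomials under log-concave measures (Bourgain, Bobkov). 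That inequality gives $\|q\|_{2}\le (Ck)^{k}\|q\|_{1/k}$, up to the exact form of the constants. Taking $1/k$-th powers, this turns the display into the claimed $C k\,\delta^{1/k}$ bound with $\epsilon=\delta\sqrt{\Var(p)}$.

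To prove the display, we would use the Lov\'asz–Simonovits / Kannan–Lov\'asz–Simonovits localization lemma. Suppose the inequality failed for some log-concave $\nu$ on $\R^n$. Both sides are integrals against $\nu$ of functions built from $q$, so localization reduces the failure to a ``needle''. A needle is a segment $[a,b]\subset\R^n$ carrying a measure with log-affine (in particular log-concave) density. On the needle, $q$ restricts to a univariate polynomial $r$ of degree at most $k$. It therefore suffices to prove the display in one dimension, with a constant independent of the needle.

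The one-dimensional case is the core step. The sublevel set $\{s:|r(s)|\le\epsilon\}$ is a union of at most $k$ intervals. Pick a threshold $\tau$ and compare with the set $\{|r|\le \tau\}$. A Remez-type inequality for log-concave densities on an interval bounds $\sup_I|r|$ in terms of $|r|$ on any subset $E$ of the interval $I$ of relative $\nu$-measure $\theta$: roughly, $\sup_I|r|\le (C/\theta)^k\sup_E|r|$. Choose $E=\{|r|\le\epsilon\}$ with $\theta=\nu(E)$. This gives $|r|\le (C/\theta)^k\epsilon$ on the whole needle, hence $\E|r|^{1/k}\le C\epsilon^{1/k}/\theta$, which rearranges to the display. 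The extra factor of $k$ comes from summing over the at most $k$ components, or from the constant in the log-concave Remez inequality.

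We expect the main obstacle to be this log-concave Remez inequality, that is, obtaining constants uniform over all log-concave densities on the needle. Our first attempt would be to push the measure forward by its distribution function, which makes it uniform on $[0,1]$. We would then use that log-concavity distorts lengths only by a bounded factor on subintervals of fixed mass, together with the classical Remez inequality for Lebesgue measure, where the Chebyshev extremal bound gives $(4/\theta)^k$. The reverse H\"older step is standard, but it is the other place where the constant $C$ must be kept universal.
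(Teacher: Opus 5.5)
Citing \cite{carbery2001} is exactly what the paper does, since it gives no proof of this statement. Your reduction, replacing $p$ by $p-t$ and using $\E[(p-t)^2]\ge\Var(p)$ so that only a norm of $p-t$ matters, is sound. The optional re-derivation, however, breaks at its core one-dimensional step.

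The uniform Remez-type bound $\sup_I|r|\le(C/\theta)^k\sup_E|r|$, with $\theta=\nu(E)$, is false for log-concave weights. It fails even on a bounded needle with log-affine density. On $I=[0,1]$ with density proportional to $e^{-Lt}$, take $r(t)=t$ and $E=\{|r|\le 1/L\}=[0,1/L]$. Then $\nu(E)\ge 1-e^{-1}$, yet $\sup_I|r|/\sup_E|r|=L$ is unbounded. So your claim that $|r|\le(C/\theta)^k\epsilon$ on the whole needle is false in general.

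Your planned fix does not rescue it. After pushing forward by the distribution function $F$, the function $r\circ F^{-1}$ is no longer a polynomial, so the classical Remez inequality does not apply. Also, log-concavity does not distort lengths by a bounded factor on subintervals of equal mass: in the same example, the first and last subintervals of mass $\theta$ have length ratio of order $L$.

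What the one-dimensional step needs is an integral estimate that exploits the exponential decay of log-concave densities, not a sup bound. The statement is: for every log-concave probability $\nu$ on $\R$, every polynomial $r$ of degree at most $k$ and every $E$ with $\nu(E)=\theta>0$, $\E_\nu|r|^{1/k}\le (C/\theta)\sup_E|r|^{1/k}$. One way to get it:
\begin{enumerate}
\item Factor $r=c\prod_j(t-z_j)$ over $\mathbb{C}$; H\"older gives $\E_\nu|r|^{1/k}\le|c|^{1/k}\prod_j(\E_\nu|t-z_j|)^{1/k}$.
\item For each root, show $\E_{t\sim\nu|_E/\theta}\log\big(|t-z_j|/\E_\nu|s-z_j|\big)\ge\log(c\theta)$. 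This uses that the density of $\nu|_E/\theta$ is at most $\theta^{-1}$ times that of $\nu$, which is at most $\frac{C}{\sigma}e^{-c|x-\mu|/\sigma}$, with $\mu,\sigma$ the mean and standard deviation of $\nu$.
\item Averaging over $j$ gives a point of $E$ where $|r|^{1/k}\ge c\theta\,\E_\nu|r|^{1/k}$.
\end{enumerate}

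This yields your display with an absolute constant rather than $Ck$, and you need that. As written, the display already carries a factor $Ck$, and the reverse H\"older step $\|q\|_2^{1/k}\le Ck\,\E|q|^{1/k}$ contributes another. Your chain would therefore give $C^2k^2\delta^{1/k}$, not $Ck\delta^{1/k}$. Finally, since the display is a product of two $\nu$-integrals, the localization step should use the two-function form of the Lov\'asz--Simonovits lemma.
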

To apply Carbery-Wright inequality, we first prove the following lemma that compares the mean and the variance of low-degree polynomials under a Gaussian. The univariate case where $q$ is only a polynomial of $\lambda$ is a special case and is also used in bounding the low-degree advantage in Theorem~\ref{thm:LDA}. 
\begin{restatable}{lemma}{meanvarratio}
    \label{lem:mean-var-ratio}
    Let $q$ be a polynomial in $(\lambda,g)$ of degree at most $k$ with zero constant term. Under $(\lambda,g)\sim N(0,I_2)$,
    \[
    |\E[q(\lambda,g)|\le C\sqrt{k\Var(q(\lambda,g))}.
    \]
\end{restatable}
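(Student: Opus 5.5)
The plan is to work in the orthonormal Hermite basis of $L^2(N(0,I_2))$ from Section~\ref{sec:hermite-polynomials} and use the zero-constant-term condition as a single linear constraint on the Hermite coefficients. Expand
\[
q(\lambda,g)=\sum_{a+b\le k}\hat q_{a,b}\,h_a(\lambda)h_b(g).
\]
By orthonormality, $\E[q]=\hat q_{0,0}$ and $\Var(q)=\sum_{(a,b)\neq(0,0)}\hat q_{a,b}^2$. The hypothesis that $q$ has zero constant term means $q(0,0)=0$. Evaluating the expansion at the origin therefore gives
\[
\hat q_{0,0}=-\sum_{(a,b)\neq(0,0),\,a+b\le k}\hat q_{a,b}\,h_a(0)h_b(0).
\]

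Applying Cauchy--Schwarz to this identity yields
\[
|\E[q]|\le \sqrt{\Var(q)}\cdot\Big(\sum_{a+b\le k}h_a(0)^2h_b(0)^2\Big)^{1/2}.
\]
It then suffices to show that the sum $S_k:=\sum_{a+b\le k}h_a(0)^2h_b(0)^2$ is $O(k)$. Dropping the constraint $a+b\le k$ and keeping only $a,b\le k$ gives $S_k\le\big(\sum_{a=0}^k h_a(0)^2\big)^2$, so it is enough to bound the one-dimensional sum.

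The main computational step is the one-dimensional estimate $\sum_{a\le k}h_a(0)^2=O(\sqrt k)$. From \eqref{eqn:hermite-to-monomial}, or directly from $\He_a(0)=0$ for odd $a$ and $\He_a(0)=(-1)^{a/2}(a-1)!!$ for even $a$, we get $h_a(0)=0$ for odd $a$ and
\[
h_a(0)^2=\frac{((a-1)!!)^2}{a!}=\binom{a}{a/2}2^{-a}\le \frac{1}{\sqrt{\pi a/2}}\cdot O(1)\quad\text{for even }a.
\]
This last bound is the standard central binomial estimate $\binom{2j}{j}\le 4^j/\sqrt{\pi j}$. Summing over even $a\le k$ gives $\sum_{a\le k}h_a(0)^2\le 1+O\big(\sum_{j\le k/2} j^{-1/2}\big)=O(\sqrt k)$. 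Squaring gives $S_k=O(k)$, and hence $|\E[q]|\le C\sqrt{k\,\Var(q)}$.

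The univariate case, where $q$ depends only on $\lambda$, follows in the same way and even yields the better bound $O(k^{1/4})$. The only place that needs care is the central binomial bound on $h_a(0)^2$; everything else is orthonormality and Cauchy--Schwarz.
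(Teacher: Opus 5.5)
Your proposal is correct and follows essentially the same route as the paper: expand in the product Hermite basis, use $q(0,0)=0$ to write $\hat q_{0,0}$ as a linear combination of the remaining coefficients with weights $h_a(0)h_b(0)$, apply Cauchy--Schwarz, and bound $\sum_{a\le k}h_a(0)^2=O(\sqrt{k})$; the paper does this last step via Stirling, and you do it via the exact central binomial bound. Your side remark that the univariate case gives $O(k^{1/4})$ is also correct, and it is sharper than the $O(\sqrt{k})$ bound the paper uses when it applies this lemma in Theorem~\ref{thm:LDA}.
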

\begin{proof}
    Let 
    \[
    q(\lambda,g) = \sum_{\ell_1,\ell_2: 0<\ell_1+\ell_2\le k} \beta_{\ell_1, \ell_2} \lambda^{\ell_1} g^{\ell_2}.
    \]
    We can also write $q(\lambda,g)$ using the normalized Hermite basis defined in Section~\ref{sec:hermite-polynomials}. Suppose $\gamma_{i_1,i_2}$ are the coefficients corresponding to $h_{i_1}(\lambda)h_{i_2}(g)$, i.e.,
    \[
    q(\lambda,g) = \sum_{0\le i_1,i_2\le k} \gamma_{i_1,i_2}h_{i_1}(\lambda)h_{i_2}(g).
    \]
     Since $h_{i_1}(\lambda)h_{i_2}(g)$ are orthonormal over the standard Gaussian distribution, 
    \begin{align*}
        \E_{N(0,I_2)}[q(\lambda,g)] &= \sum_{0\le i_1,i_2\le k} \gamma_{i_1,i_2}\E_{N(0,I_2)}[h_{i_1}(\lambda)h_{i_2}(g)]= \gamma_{0,0},\\
    \mathrm{Var}_{N(0,I_2)}(q(\lambda,g)) &= \sum_{0\le i_1,i_2\le k} \gamma_{i_1,i_2}\E_{N(0,I_2)}[h_{i_1}(\lambda)^2h_{i_2}(g)^2]-\gamma_{0,0}^2 =
    \sum_{i_1,i_2\in \mathcal I'} \gamma_{i_1,i_2}^2
    \end{align*}
    where $\mathcal I'= \{(i_1,i_2)\neq 0: i_1,i_2\text{ even}, 0\le i_2\le i_1\le k\}$.
    By the formula (\ref{eqn:hermite-to-monomial}), we have
    \[
    \beta_{0,0} = \sum_{0\le i_1,i_2\le k,i_1,i_2\text{ even}}a_{i_1,\frac{i_1}{2}}a_{i_2,\frac{i_2}{2}}\gamma_{i_1,i_2}, ~~~
    a_{i,\frac{i}{2}} = \left(-\frac{1}{2}\right)^{\frac{i}{2}}\frac{\sqrt{i!}}{\left(\frac{i}{2}\right)!}.
    \]
    Thus,
    \[
    \beta_{0,0} = \gamma_{0,0}+\sum_{i_1,i_2\in I^*}a_{i_1,\frac{i_1}{2}}a_{i_2,\frac{i_2}{2}}\gamma_{i_1,i_2}.
    \]
    Given the constant term $\beta_{0,0}=0$, we have the identity
    \[
    \gamma_{0,0}=-\sum_{i_1,i_2\in I^*}a_{i_1,\frac{i_1}{2}}a_{i_2,\frac{i_2}{2}}\gamma_{i_1,i_2}.
    \]
    Then by Cauchy-Schwarz, we have
    \[
    \frac{\gamma_{0,0}^2}{\sum_{i_1,i_2\in I^*}\gamma_{i_1,i_2}^2}
    \le \sum_{i_1,i_2\in I^*}a_{i_1,\frac{i_1}{2}}^2a_{i_2,\frac{i_2}{2}}^2.
    \]
    By Stirling's approximation, 
    \[
    a_{i,\frac{i}{2}}^2= \frac{i!}{2^i(\frac{i}{2})!(\frac{i}{2})!}\sim \sqrt{\frac{2}{\pi i}}.
    \]
    Thus, we can get the following upper bound
    \begin{align*}
    \sum_{i_1,i_2\in I^*}a_{i_1,\frac{i_1}{2}}^2a_{i_2,\frac{i_2}{2}}^2 
    &\le \left(\sum_{1\le i\le k, i \text{ even}}a_{i,\frac{i}{2}}^2\right)^2-a_{0,0}^2 \\
    &\le \left(\sum_{1\le i\le k, i \text{ even}}\sqrt{\frac{2}{\pi i}}+1\right)^2-1\\
    &\le \frac{2k}{\pi}+2\sqrt{\frac{2k}{\pi}}
    \end{align*}
    and then $|\E[q]|\le C \sqrt{k\Var(q)}$ for some constant $C$.
    \end{proof}

\begin{proof}[Proof of Lemma~\ref{lem:anticonc}]
    Consider the joint distribution $(\lambda,g)\sim N(0,I_2)$. Then $(\lambda,\lambda g) \sim \nu$.
    Let $
    q(\lambda,g) = p(\lambda,\lambda g)$.
    It is equivalent to prove that 
    \[
    \forall \delta>0, ~\sup_{t \in \R} \Pr_{(\lambda,g)\sim \mathcal{N}(0,1)} \Big[\big|q(\lambda,g) -t \big| \le \delta \cdot |\E[q]| \Big] \le C \cdot k\cdot \Big(\sqrt{k}\delta\Big)^{\frac{1}{2k}}.
    \]
By Lemma~\ref{lem:mean-var-ratio}, $|\E[q]|\le C \sqrt{k\Var(q)}$ for some constant $C$.
Then by Carbery-Wright, 
\begin{align*}
    \forall \delta>0, ~
    \sup_{t \in \R} \Pr_{(\lambda,z)\sim \nu} \Big[\big|q(\lambda,z) -t \big| \le \delta \cdot |\E[q]| \Big] 
    &\le
    \sup_{t \in \R} \Pr_{(\lambda,z)\sim \nu} \Big[\big|q(\lambda,z) -t \big|
    \le \delta \cdot c\sqrt{k~\mathrm{Var}(q)} \Big] \\
    &\le C \cdot k\cdot (\sqrt{k}\delta)^{\frac{1}{2k}}.
    \end{align*}
\end{proof}

\subsection{Tukey depth and Existence of Distribution Matching Moments}\label{sec:tukey-depth}
Next we connect the anti-concentration properties from Lemma~\ref{lem:anticonc} to the convex geometry required for our moment-matching proof in Section~\ref{sec:moment-matching}. Our goal is to prove that our target moment vector $\theta$ is in the convex hull of the support of $\Phi(\lambda,z)$.

We first introduce the concept of Tukey depth, a geometric measure of data-centrality. We then use a Carath\'eodory-style theorem (Theorem~\ref{thm:lyons}) that formally links positive Tukey depth to the existence of a vector within the convex hull of a set of random samples.

\begin{defn}[Tukey depth]\label{def:Tukey-depth}
Given a random variable $\Phi$ in $\R^D$ and a point $\theta \in \R^D$, the Tukey-depth of $\theta$ is given by
\begin{equation}\label{eq:Tukey-depth}
h_{\Phi}(\theta) = \inf_{u: \|u\|_2=1} \Pr_{\Phi}\Big[ \iprod{u,\Phi - \theta} \ge 0 \Big]. 
\end{equation}
\end{defn}

The above notion introduced by Tukey~\cite{tukey1975mathematics} plays an important role in statistics. The Tukey depth is also related to Carath\'eodory-style bounds for random polytopes as follows. To prove that $\theta \in \conv(\{\Phi_1, \dots, \Phi_m\})$, it suffices by the hyperplane separator theorem to show that for every direction $u$, there exists $i \in [m]$ such that $\iprod{u,\Phi_i} \ge \iprod{u,\theta}$. Hence, if we sample $m=\tilde{O}(D/h_\Phi(\theta))$ random points $\Phi_1, \dots, \Phi_m$ i.i.d. from the distribution of $\Phi$, one can use Chernoff-Hoeffding bounds along with an $\epsilon$-net over unit vectors in $\R^D$ to argue that $\theta \in \mathrm{conv}\{(\Phi_1, \dots, \Phi_m)\}$ w.h.p. The following theorem in \cite{hayakawa2023a} gives a tighter bound. 

\begin{theorem}[Carath\'eodory bound for random polytopes based on Tukey depth, \cite{hayakawa2023a}]\label{thm:lyons}
Consider a random vector $\Phi$ in $\R^D$ and let $\theta \in \R^D$ have Tukey depth $h_\Phi(\theta)$. Then given $m = (3D+1)/h_\Phi(\theta)$ i.i.d. samples $\Phi_1, \dots, \Phi_m$ from the distribution of $\Phi$, we have 
$$\Pr\Big[\theta \in \mathrm{conv}\Big(\{\Phi_1, \dots, \Phi_m\} \Big)\Big] \ge \frac{1}{2}.$$ 
\end{theorem}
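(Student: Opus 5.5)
The plan is to reduce to a hitting-time estimate and then apply Markov's inequality. Translate so that $\theta=0$, and write $h=h_\Phi(0)>0$. Sample $\Phi_1,\Phi_2,\dots$ sequentially and let
\[
\tau=\min\{j:\ 0\in\conv(\Phi_1,\dots,\Phi_j)\}.
\]
If we can show $\E[\tau]\le (3D+1)/(2h)$, then Markov's inequality gives
\[
\Pr[\tau>m]\le \E[\tau]/m\le \tfrac12
\]
for $m=(3D+1)/h$, which is the statement of Theorem~\ref{thm:lyons}. Moreover, $0\in\conv(\Phi_1,\dots,\Phi_j)$ is equivalent to the cone $K_j=\mathrm{cone}(\Phi_1,\dots,\Phi_j)$ being all of $\R^D$ together with a Carath\'eodory-type positive combination. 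To avoid boundary issues, I first treat the closed-halfspace version: $0\in\conv$ iff no $u\ne 0$ has $\iprod{u,\Phi_i}<0$ for all $i$. At the end I would check that the depth definition with ``$\ge 0$'' is exactly what is used.

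The potential I would track is the lineality space $L_j=K_j\cap(-K_j)$ of the cone. This is the largest subspace contained in $K_j$, with $\ell_j=\dim L_j$, and $0\in\conv$ once $\ell_j=D$. When $\ell_j<D$, project onto $L_j^\perp$. The image of $K_j$ is then a pointed cone, so there is a unit $u\perp L_j$ with $\iprod{u,x}>0$ on $K_j\setminus L_j$. By the Tukey depth assumption applied to $-u$, a fresh sample satisfies $\iprod{u,\Phi}\le 0$ with probability at least $h$. Such a sample either lies in $L_j$ or strictly enlarges the part of $K_j$ that is ``opposite'' to the current pointed part.

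The core combinatorial claim is that, starting from any configuration with lineality dimension $\ell$, the expected number of further samples needed to raise $\ell$ by at least one is $O(1/h)$. More precisely, the accounting should give at most $3/h$ per unit of dimension plus $1/h$ overall, which yields $(3D+1)/h$. I would organize this step by induction on $D$, using the $D=1$ case as a template. In one dimension we need one sample $\ge 0$ and one $\le 0$, each arriving with probability at least $h$, so $\E[\tau]\le 2/h$.

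For the inductive step, I would fix the pointed part of the cone and choose $u$ adaptively. I would take $u$ to be a direction realizing the depth infimum for the \emph{current} extreme ray structure, so that a ``good'' sample (probability $\ge h$) either collapses an extreme ray into the lineality space or reduces the number of extreme rays of the projected pointed cone. Since the projected cone lives in dimension $D-\ell$, a Carath\'eodory bound shows that only $O(D-\ell)$ extreme rays are ever relevant. This is where the constant $3$ would come from, via roughly three good events per dimension.

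The main obstacle is exactly this amortization. A naive union bound over separating hyperplanes spanned by $(D-1)$-subsets only gives $\binom{m}{D-1}(1-h)^{m-D+1}$. That bound forces an extra $\log$ factor, and the sharp linear dependence $(3D+1)/h$ does not come out of it. My first attempt would be to define the potential as $\Psi_j=2(D-\ell_j)+r_j$, where $r_j\le D-\ell_j$ is the number of extreme rays of the projected pointed cone that are needed to certify pointedness. I would then show that each step decreases $\Psi_j$ by at least one with probability at least $h$ and never increases it. The initial value is $\Psi_0\le 3D+1$, which gives $\E[\tau]\le(3D+1)/h$. Halving this via Markov then requires exactly the constant bookkeeping of \cite{hayakawa2023a}. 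Verifying the monotonicity of $\Psi$, and in particular handling samples that create new extreme rays, is the step I expect to need the most care.
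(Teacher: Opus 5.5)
The paper gives no proof of Theorem~\ref{thm:lyons}. It imports the statement from \cite{hayakawa2023a}, remarking only that a Chernoff-plus-net argument would give the weaker $m=\tilde O(D/h_\Phi(\theta))$. So your proposal has to stand on its own, and it has a genuine gap that starts with the reduction. After translating $\theta=0$, $0\in\conv(\Phi_1,\dots,\Phi_j)$ holds iff some $\Phi_i=0$ or the cone $K_j$ is \emph{not pointed}, i.e.\ $\ell_j\ge 1$. The condition $K_j=\R^D$ (i.e.\ $\ell_j=D$) instead characterizes $0\in\mathrm{int}\,\conv(\Phi_1,\dots,\Phi_j)$, which is strictly stronger and can have probability zero under the hypotheses. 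Two examples show this. If $\Phi$ is supported on a hyperplane through $\theta$ and has positive depth inside it, then $h_\Phi(\theta)>0$ but $\ell_j\le D-1$ forever. If $\Phi=\theta$ and $\Phi=\theta+e_1$ with probability $1/2$ each, then $h_\Phi(\theta)=1/2$ and $\tau$ is geometric, yet $\ell_j=0$ for all $j$. So your core claim that $\ell$ increases by one within $O(1/h)$ expected steps is false in general, and $\Psi_j=2(D-\ell_j)+r_j$ need never reach its terminal value.

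Even for full-dimensional, absolutely continuous $\Phi$, the combinatorial potential does not move at rate $h$. Take $D=2$, $\Phi$ uniform on the unit circle (so $h=1/2$), and $K_j$ a sector of small opening angle $\varphi$. Whatever admissible $u$ you pick, a sample with $\iprod{u,\Phi}\le 0$ changes $\ell_j$ or $r_j$ only if it lands in $-K_j$, which has probability $\varphi/(2\pi)$. Otherwise it just swaps one bounding ray for another and $\Psi_j$ is unchanged. For $D\ge 3$ a sample can also add extreme rays, so $\Psi_j$ is not even monotone. Progress here is quantitative (how large $-K_j$ has become), and no ray count captures it. Finally, even granting $\E[\tau]\le (3D+1)/h$, Markov at $m=(3D+1)/h$ yields nothing, and you defer the missing factor $2$ to the very reference being proved.

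What is missing is a continuous progress measure plus an exchangeability count, and it suffices to bound $q_n=\Pr[\theta\notin\conv(\Phi_1,\dots,\Phi_n)]$ directly rather than $\E[\tau]$. On the event $\theta\notin\conv(\Phi_1,\dots,\Phi_n)$, let $y_n$ be the point of the hull nearest $\theta$.
\begin{itemize}
\item Depth in direction $(\theta-y_n)/\norm{\theta-y_n}$ shows that a fresh sample strictly decreases the distance with conditional probability at least $h$.
\item $y_{n+1}$ is a convex combination of at most $D$ samples (Carath\'eodory inside the supporting hyperplane). So by exchangeability, the probability that $\theta\notin\conv(\Phi_1,\dots,\Phi_{n+1})$ and deleting $\Phi_{n+1}$ changes the distance is at most $D/(n+1)$.
\item The only other way for the distance to drop is for $\theta$ to enter the hull, which has probability $q_n-q_{n+1}$.
\end{itemize}
Hence $hq_n\le D/(n+1)+q_n-q_{n+1}$. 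Induction then gives $q_n\le 3D/(2hn)$ for $D\ge 2$ and $q_n\le 2/(hn)$ for $D=1$, so $q_n\le 1/2$ once $n\ge (3D+1)/h$.
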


We will apply the above theorem with the random vector $\theta$ being the candidate moment vector of interest, and $\Phi$ being random samples from the distribution $\nu$ to show the existence of a valid distribution that matches the moments given by $\theta$.  

We first show the following lemma that translates the two-sided concentration bound (Lemma~\ref{lem:anticonc}) into one-sided concentration bound, which then gives a lower bound on the Tukey depth
of our target point. %

\begin{lemma}\label{lem:anticonc-onesided}
Suppose $\nu$ is a probability measure over $\R^n$ that satisfies the following anti-concentration statements: for any multivariate polynomial $p:\R^n \to \R$ of degree at most $k$, 
\[
\forall \delta>0, ~\Pr_{x\sim \nu} \Big[\left\lvert p(x) -\E_\nu[p] \right\rvert \le \delta \cdot \sqrt{\Var(p(x))} \Big] \le \eta_k(\delta).
\]
Then for any $\delta_1 >0$ such that $\eta_k(\delta_1)\le 1/2$, we have
\[
\Pr_{x \sim \nu}\Big[  p(x) \ge \E_\nu[p(x)]+\delta \cdot \sqrt{\Var(p(x))} \Big] \ge \frac{\delta_1^2}{16} - \eta_k(\delta).
\]
In particular, there exists $\delta_0>0$ such that
\[
\forall \delta<\delta_0,~~ \Pr_{x \sim \nu}\Big[  p(x) \ge \E_\nu[p(x)] + \delta \cdot \sqrt{\Var(p(x))} \Big] \ge 3 \eta_k(\delta),
\]
by setting $\delta_1=8 \sqrt{\eta_k(\delta)}$. Here  $\delta_0>0$ a constant chosen such that $\eta_k(8\sqrt{\eta_k(\delta_0)})<\frac{1}{2}$.
\end{lemma}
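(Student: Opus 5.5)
The plan is to normalize and work with a single scalar random variable. Fix a polynomial $p$ of degree at most $k$ with $\Var(p)>0$; the degenerate case is excluded, since the hypothesis would otherwise force $\eta_k\ge 1$. Set
\[
Y=\frac{p(x)-\E_\nu[p]}{\sqrt{\Var(p)}},
\]
so that $\E[Y]=0$ and $\E[Y^2]=1$. In this notation the hypothesis reads $\Pr[|Y|\le \delta]\le \eta_k(\delta)$ for all $\delta>0$, and the goal is a lower bound on $\Pr[Y\ge \delta]$. I split this into two pieces:
\[
\Pr[Y\ge \delta]\;\ge\;\Pr[Y>0]-\Pr[0<Y<\delta]\;\ge\;\Pr[Y>0]-\eta_k(\delta).
\]
Here the last step uses $\{0<Y<\delta\}\subseteq\{|Y|\le\delta\}$. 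It therefore suffices to show that $\Pr[Y>0]\ge \delta_1^2/16$.

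The lower bound on $\Pr[Y>0]$ comes from combining the mean-zero property with a second-moment argument.
\begin{itemize}
\item Since $\eta_k(\delta_1)\le 1/2$, we have $\Pr[|Y|>\delta_1]\ge 1/2$, hence $\E|Y|\ge \delta_1/2$.
\item Since $\E[Y]=0$, the positive and negative parts have equal mean: $\E[Y_+]=\E[Y_-]=\tfrac12\E|Y|\ge \delta_1/4$.
\item By Cauchy--Schwarz, $\E[Y_+]=\E[Y\,\mathbbm{1}_{Y>0}]\le \sqrt{\E[Y^2]\,\Pr[Y>0]}=\sqrt{\Pr[Y>0]}$.
\end{itemize}
Together these give $\Pr[Y>0]\ge \delta_1^2/16$, which proves the main inequality.

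For the ``in particular'' statement, substitute $\delta_1=8\sqrt{\eta_k(\delta)}$. Then $\delta_1^2/16=4\eta_k(\delta)$, and the bound becomes $4\eta_k(\delta)-\eta_k(\delta)=3\eta_k(\delta)$. It remains to check the side condition $\eta_k(\delta_1)\le 1/2$. This is where I would assume, without loss of generality, that $\eta_k$ is nondecreasing in $\delta$; this is harmless, since we may replace $\eta_k(\delta)$ by $\sup_{\delta'\le\delta}\eta_k(\delta')$, and it holds for the explicit Carbery--Wright-type bound of Lemma~\ref{lem:anticonc}. Under monotonicity, $\delta<\delta_0$ gives $8\sqrt{\eta_k(\delta)}\le 8\sqrt{\eta_k(\delta_0)}$, and hence $\eta_k(\delta_1)\le \eta_k(8\sqrt{\eta_k(\delta_0)})<1/2$ by the choice of $\delta_0$.

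The main obstacle is the step converting two-sided anti-concentration into mass on one side. The key observation is that zero mean forces $\E[Y_+]=\E[Y_-]$, and unit variance then prevents the positive part from being carried by a tiny-probability event; the rest is bookkeeping.
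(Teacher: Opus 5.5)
Your proof is correct and matches the paper's argument step for step. You center and normalize, use $\E[Z]=0$ to get $\E[Z_+]=\tfrac12\E|Z|$, bound $\E[Z_+]\le\sqrt{\Var(Z)\Pr[Z\ge 0]}$ by Cauchy--Schwarz, lower-bound $\E|Z|$ via anti-concentration at $\delta_1$, and subtract $\eta_k(\delta)$ for the strip $[0,\delta)$. The paper leaves implicit the monotonicity of $\eta_k$ (which holds for the Carbery--Wright bound) in the ``in particular'' clause; you make it explicit, and it is what lets the condition at $\delta_0$ carry over to all $\delta<\delta_0$.
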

\begin{proof}
Consider the r.v.  $Z = p(x) - \E[p(x)]$. We have $\E[Z]=0$. By the anti-concentration property of $p$ under measure $\nu$, we have 
\begin{align}
\Pr[|Z| \le \delta \sqrt{\Var[Z]}] \le \eta_k(\delta), \text{ and } \Pr[|Z| \le \delta_1 \sqrt{\Var[Z]}] \le \eta_k(\delta_1).\label{eq:intermid-1}
\end{align}

\noindent Furthermore since $\E[Z]=0$, by Cauchy-Schwarz,
\[
\frac{1}{2}\E[|Z|] = \E\Big[ Z ~\mathbf{1}[Z \ge 0] \Big] \le \sqrt{\Var[Z]} \sqrt{\Pr[Z \ge 0]}.
\]
Then
\begin{equation}
\Pr\Big[ Z \ge \E[Z]+\delta \sqrt{\Var[Z]} \Big] \ge \Pr[ Z \ge 0] - \eta_k(\delta) \ge \frac{\E[|Z|]^2}{4 \Var[Z]} - \eta_k(\delta).  \label{eq:intermid-2}
\end{equation}
Moreover 
\begin{align}\E[|Z|]\ge \Pr\left[|Z| \ge \delta_1 \sqrt{\Var[Z]}\right] \cdot \delta_1 \sqrt{\Var[Z]} \ge (1-\eta_k(\delta_1)) \delta_1 \sqrt{\Var[Z]}. \label{eq:intermid-3}
\end{align}
Combining \eqref{eq:intermid-2}, \eqref{eq:intermid-3}, we get
\begin{align*}
\Pr\Big[ Z \ge \E[Z]+\delta \sqrt{\Var[Z]} \Big]& \ge \frac{(1-\eta_k(\delta_1))^2 \delta_1^2}{4} - \eta_k(\delta) \ge \frac{\delta_1^2}{16} - \eta_k(\delta),
\end{align*}
for our choice of parameters. 
\end{proof}

\begin{coro}\label{coro:tukey-depth}
Let $\Phi\in \R^D$ be a random variable with coordinates $\Phi_{\ell_1,\ell_2}=\lambda^{\ell_1}z^{\ell_2}$ where $(\ell_1,\ell_2)\in \mathcal I= \{(i_1,i_2)\neq 0: i_1,i_2\text{ even}, 0\le i_2\le i_1\le k\}$ and $(\lambda,z)\sim\nu$. Then for any $\delta\le 1/k^{Ck^2}$, 
\[
h_\Phi((1+\delta)\E[\Phi]) \ge 2\eta_k(\sqrt{k}\delta) >0.
\]

\end{coro}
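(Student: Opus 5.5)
We reduce the Tukey depth bound to the one-sided anticoncentration of Lemma~\ref{lem:anticonc-onesided}, applied to the polynomials $p_\beta=\langle\beta,\Phi\rangle$. Fix a unit vector $\beta\in\R^D$. By Definition~\ref{def:Tukey-depth} we need a lower bound, uniform in $\beta$, on
\[
\Pr\big[\langle\beta,\Phi-(1+\delta)\E[\Phi]\rangle\ge 0\big]=\Pr\big[p_\beta\ge \E_\nu[p_\beta]+\delta\,\E_\nu[p_\beta]\big].
\]
Here $p_\beta$ has degree at most $k$ and zero constant term. Passing to $q(\lambda,g)=p_\beta(\lambda,\lambda g)$ with $(\lambda,g)\sim N(0,I_2)$, $q$ also has zero constant term and degree at most $2k$. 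Lemma~\ref{lem:mean-var-ratio} then gives $|\E[p_\beta]|\le C\sqrt{k\,\Var(p_\beta)}$. Therefore the shift $\delta\,\E[p_\beta]$ is at most $C\sqrt{k}\,\delta\sqrt{\Var(p_\beta)}$, and it suffices to lower bound $\Pr[p_\beta\ge\E[p_\beta]+C\sqrt{k}\delta\sqrt{\Var(p_\beta)}]$.

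If $\E[p_\beta]<0$ the shift is negative, and the event contains $\{p_\beta\ge\E[p_\beta]\}$, which is easier. So we may treat $\delta'=C\sqrt k\delta$ as the effective threshold in either case.

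Next we establish the two-sided hypothesis of Lemma~\ref{lem:anticonc-onesided} for $\nu$. Apply Carbery--Wright (Theorem~\ref{thm:carbery-wright}) to $q$ under the log-concave measure $N(0,I_2)$, with degree at most $2k$. This gives
\[
\Pr\big[|p_\beta-\E p_\beta|\le \delta'\sqrt{\Var(p_\beta)}\big]\le \eta_k(\delta'):=2Ck\,(\delta')^{1/(2k)}.
\]
Lemma~\ref{lem:anticonc-onesided} with $\delta_1=8\sqrt{\eta_k(\delta')}$ then yields $\Pr[p_\beta\ge\E p_\beta+\delta'\sqrt{\Var(p_\beta)}]\ge 3\eta_k(\delta')$, provided $\delta'<\delta_0$. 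This bound is at least $2\eta_k(\sqrt k\delta)$ up to renaming constants. It is uniform in $\beta$, so taking the infimum gives the claimed Tukey depth bound, and it is positive since $\delta>0$.

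The main obstacle is verifying the parameter condition $\eta_k(8\sqrt{\eta_k(\delta')})\le 1/2$ and extracting the explicit threshold $\delta\le k^{-Ck^2}$. With $\eta_k(x)=Ck\,x^{1/(2k)}$, the inner quantity is $8\sqrt{Ck}\,(\delta')^{1/(4k)}$. Raising it to the power $1/(2k)$ and multiplying by $Ck$ must give at most $1/2$. This requires $(\delta')^{1/(8k^2)}\lesssim (Ck)^{-1}\cdot(\text{poly }k)^{-1/(2k)}$, that is, $\delta'\le k^{-O(k^2)}$, which is exactly the stated regime. The only other thing to check is that no step loses uniformity in $\beta$: all constants come from Lemma~\ref{lem:mean-var-ratio} and Carbery--Wright, and these are independent of $\beta$, so this is only bookkeeping.
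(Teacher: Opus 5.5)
Your proposal is correct and follows the paper's proof essentially step for step. You reduce each direction $\beta$ to the event $\{p_\beta\ge \E[p_\beta]+c\sqrt{k}\,\delta\sqrt{\Var(p_\beta)}\}$ via Lemma~\ref{lem:mean-var-ratio}, then invoke Lemma~\ref{lem:anticonc-onesided} with the Carbery--Wright function for $q(\lambda,g)=p_\beta(\lambda,\lambda g)$ and take $\delta_0=k^{-O(k^2)}$. Your explicit tracking of the degree-$2k$ blow-up in $q$ and of the sign of $\E[p_\beta]$ are small bookkeeping refinements the paper glosses over; they do not change the argument.
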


\begin{proof}
Let $\beta\in \R^D$ be an arbitrary vector of coefficients with $\norm{\beta}=1$. Then 
    \[
    p(\lambda,z) := \langle \beta, \Phi\rangle = \sum_{\mathcal I}\beta_{\ell_1,\ell_2}\lambda^{\ell_1} z^{\ell_2}.
    \]
    By Lemma~\ref{lem:anticonc-onesided}, we know that 
    \[
\forall \delta<\delta_0,~~ \Pr_{x \sim \nu}\Big[  p(x) \ge \E_\nu[p(x)] +  \delta \cdot \sqrt{\Var(p(x))}\Big] \ge 3 \eta_k(\delta)
\]
where $\delta_0>0$ is chosen such that $\eta_k(8\sqrt{\eta_k(\delta_0)})<\frac{1}{2}$.
Since $(\lambda,g)\sim\mathcal N(0,I_2)$ is log-concave, Carbery-Wright gives $\eta_k(\delta)=C_0 k \delta^{1/k}$ for $q(\lambda,g)$. Because $q(\lambda,g)=p(\lambda,\lambda g)$, the distributional statements for $q$ under $\mathcal N(0,I_2)$ transfer verbatim to $p$ under $\nu$.
Hence 
we can choose $\delta_0=1/k^{Ck^2}$ for some constant $C$ so that it satisfies the condition in Lemma~\ref{lem:anticonc-onesided}.
From the argument in the proof of Lemma~\ref{lem:anticonc}, we have 
$|\E[p]|\le c \sqrt{k\Var(p)}$ for some constant $c$.
So
\begin{align*}
h_\Phi((1+\delta)\E[\Phi]) 
&=\inf_{\beta: \|\beta\|_2=1} \Pr_{\nu}\Big[ \iprod{\beta,\Phi - (1+\delta)\E[\Phi]} \ge 0 \Big]\\
&=\inf_{\beta: \|\beta\|_2=1}\Pr_\nu[p\ge(1+\delta)\E[p]]\\
&\ge\inf_{\beta: \|\beta\|_2=1}\Pr_\nu\left[p\ge \E[p] + c \delta \sqrt{k\Var(p)}\right]\\
&\ge 3\eta_k(c\sqrt{k}\delta)>0.
\end{align*}
\end{proof}
Corollary~\ref{coro:tukey-depth} is the crucial link back to the proof of Theorem~\ref{thm:moment-matching}. It confirms that the specific moment vector $\Phi(\lambda,z)$ with $(\lambda,z)\sim\nu$ from our construction is sufficiently anti-concentrated (via Lemma~\ref{lem:anticonc} and Lemma~\ref{lem:anticonc-onesided}) to guarantee that a point slightly perturbed from its mean has a non-zero Tukey depth. The following lemma shows that this non-zero depth implies the point is in the interior of the convex hull, which provides the ``room'' needed to complete the moment-matching argument in the proof of Theorem~\ref{thm:moment-matching}.

\begin{lemma}\label{lem:depth-interior}
    Let $\Phi$ be a random vector in $\R^D$ such that every affine hyperplane has zero $\Phi$-measure. If $h_\Phi(\theta)>0$, then $\theta$ is in the interior of $\mathrm{conv}(\mathrm{supp}(\Phi))$. In particular, there exists $r=r(h_\Phi(\theta))>0$ such that $B(\theta,r)\subseteq \mathrm{conv}(\mathrm{supp}(\Phi))$.
\end{lemma}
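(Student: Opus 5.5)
The plan is to argue by contradiction via the separating hyperplane theorem, and then extract a quantitative radius from a compactness argument. Let $K=\mathrm{conv}(\mathrm{supp}(\Phi))$, which is convex, and suppose $\theta\notin\mathrm{int}(K)$.

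There are two cases. If $\theta\notin \overline{K}$, strict separation gives a unit $u$ with $\iprod{u,y-\theta}<0$ for all $y\in K$. Since $\Phi\in\mathrm{supp}(\Phi)\subseteq K$ almost surely, this yields $\Pr[\iprod{u,\Phi-\theta}\ge 0]=0$, contradicting $h_\Phi(\theta)>0$. If instead $\theta\in\overline K\setminus \mathrm{int}(K)$, we first observe that $K$ has nonempty interior. Otherwise $K$ would lie in an affine hyperplane $H$, and then $\Pr[\Phi\in H]=1$, contradicting the hypothesis that hyperplanes have zero measure. So $\theta$ is a boundary point of a full-dimensional convex set, and the supporting hyperplane theorem gives a unit $u$ with $\iprod{u,y-\theta}\le 0$ for all $y\in K$. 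Hence
\[
\Pr\big[\iprod{u,\Phi-\theta}\ge 0\big]=\Pr\big[\iprod{u,\Phi-\theta}=0\big]=\Pr[\Phi\in H_u]=0,
\]
where $H_u$ is the affine hyperplane $\{y:\iprod{u,y-\theta}=0\}$. This again contradicts $h_\Phi(\theta)>0$. Therefore $\theta\in\mathrm{int}(K)$, so some ball $B(\theta,r)\subseteq K$.

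For the ``in particular'' part, the claim that $r$ depends only on $h_\Phi(\theta)$ (for the fixed law of $\Phi$), I would proceed quantitatively. The key fact is that the Tukey depth is $1$-Lipschitz-like under small translations, in the following sense. The function
\[
g(\rho)=\sup_{\|u\|=1}\Pr\big[0\le\iprod{u,\Phi-\theta}<\rho\big]
\]
satisfies $g(\rho)\to 0$ as $\rho\to 0$. This follows by compactness of the sphere together with the zero-mass-on-hyperplanes hypothesis: for each fixed $u$ the probability tends to $0$, and a standard $\epsilon$-net/Dini-type argument upgrades this to uniform convergence, using continuity of $u\mapsto$ the slab probabilities. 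Then for $\|\theta'-\theta\|\le\rho$ we get $h_\Phi(\theta')\ge h_\Phi(\theta)-g(\rho)$. Choosing $r$ with $g(r)<h_\Phi(\theta)$ makes every $\theta'\in B(\theta,r)$ have positive depth, and the first part then places each such $\theta'$ in $K$.

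The main obstacle is the uniformity in $u$ of the slab probabilities. I expect to handle it via tightness of $\Phi$, restricting to a large ball carrying mass $1-\epsilon$, combined with compactness of directions. A sequence $u_j\to u$ with slab mass bounded below would force positive mass on the limiting hyperplane, which is ruled out by the hypothesis.
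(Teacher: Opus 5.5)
Your proof is correct, and its first half is the paper's argument: a supporting hyperplane at a non-interior $\theta$ confines $\mathrm{supp}(\Phi)$ to a closed half-space. Hence $\Pr[\iprod{u,\Phi-\theta}\ge 0]$ is at most the mass of a hyperplane, which is zero. You write this more carefully than the paper does. The paper only considers $\theta$ on the boundary and uses the zero-hyperplane-mass hypothesis without saying so. You also handle $\theta\notin\overline{K}$ and invoke the hypothesis explicitly.

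The genuine difference is the ``in particular'' clause. The paper just notes that the interior of $K$ is open, which yields some $r>0$ but no relation between $r$ and $h_\Phi(\theta)$. You instead prove the stability estimate $h_\Phi(\theta')\ge h_\Phi(\theta)-g(\norm{\theta'-\theta})$ with $g(\rho)\to 0$. The Dini route is sound: for fixed $\rho$, the map $u\mapsto\Pr[0\le\iprod{u,\Phi-\theta}<\rho]$ is continuous on the sphere by dominated convergence and the hyperplane hypothesis, and it decreases monotonically to $0$ as $\rho\downarrow 0$. This gives more than the paper needs: the set of positive-depth points is open, and $r$ comes from an explicit modulus rather than bare existence.

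One small caveat: your $g$ is centered at $\theta$, so the resulting $r$ depends on $\theta$ itself, not only on $h_\Phi(\theta)$ as the notation $r(h_\Phi(\theta))$ suggests. The paper's $r$ has the same dependence, so this is not a gap relative to it. If you want the stronger reading, take the supremum over all slabs $\{t\le\iprod{u,\Phi}<t+\rho\}$ with $t\in\R$. Tightness confines $t$ to a compact interval up to an arbitrarily small loss, and the same compactness argument then applies over the pairs $(u,t)$.
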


\begin{proof}
    We will prove by contradiction. Suppose $\theta$ is on the boundary of 
    $\mathrm{conv}(\mathrm{supp}(\Phi))$, then a supporting hyperplane $u$ satisfies $\langle u,\phi\rangle\le \langle u,\theta\rangle$ for all $\phi$ in the support, which implies $h_\Phi(\theta)=0$. Therefore, $\theta$ is in the interior of $\mathrm{conv}(\mathrm{supp}(\Phi))$. Since the interior of $\mathrm{conv}(\mathrm{supp}(\Phi))$ is open, there exists $r>0$ such that $B(\theta,r)\subseteq \mathrm{conv}(\mathrm{supp}(\Phi))$.
\end{proof}

\section{Extending Hardness to Higher Degrees via Bounded LDA}\label{sec:higher-degrees}
In this section, we prove the first part of Theorem~\ref{ithm:main2}. We extend our hardness result to polynomials of degree $k=\poly(n)$ demonstrating that the LDA remains bounded in this regime. This effectively rules out any efficient algorithm that relies solely on low-degree polynomials.
We will use the same null distribution $Q$ as in Section~\ref{sec:moment-matching}, i.e., $Q$ is a scale mixture of Gaussians with Gaussian scales $\mu=\mathcal{N}(0,1)$. Define the planted distribution $P = (1-\alpha)Q + \alpha 0$. %

\begin{theorem}[LDA Bound up to degree $k= n^{\Omega(1)}$]\label{thm:LDA}
For polynomials of degree at most $k$ on $m$ samples,
\[
\LDA^{(m)}_{\le k}(P,Q) \le \sqrt{(1+C^2\alpha^2 k)^m-1}.
\]
In particular, if $k = O(1/(\alpha^2 m))$, then $\LDA^{(m)}_{\le k}(P,Q) = O(1)$. 
   
\end{theorem}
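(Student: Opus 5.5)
Our plan has three steps: reduce the multi-sample advantage to a single-sample chi-square-type quantity, bound that quantity using Lemma~\ref{lem:mean-var-ratio}, and then tensorize. The starting point is the identity $\E_P[f]-\E_Q[f]=\alpha(f(0)-\E_Q[f])$, valid for any single-sample polynomial $f$ of degree $\le k$ because $P=(1-\alpha)Q+\alpha\delta_0$. Set $g=f-f(0)$. Then $g$ has zero constant term, $\Var_Q(g)=\Var_Q(f)$, and $f(0)-\E_Q[f]=-\E_Q[g]$.

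\textbf{Single-sample bound.} Write $X=\lambda Y$ with $\lambda\sim\mathcal N(0,1)$ and $Y\sim\mathcal N(0,I_n)$ independent. For fixed $Y$, the function $\lambda\mapsto g(\lambda Y)$ is a univariate polynomial of degree $\le k$ in $\lambda$ with zero constant term. Apply the univariate case of Lemma~\ref{lem:mean-var-ratio} conditionally on $Y$; in Hermite form, it says the $h_0$ coefficient is at most $C\sqrt k$ times the $\ell_2$-norm of the remaining coefficients. Then integrate over $Y$ and use Cauchy--Schwarz. Concretely, expand $g(\lambda Y)=\sum_i h_i(\lambda)G_i(Y)$, so that
\begin{align*}
\E_Q[g] &= \E_Y[G_0], & \Var_Q(g) &\ge \sum_{i\ge1}\E_Y[G_i^2].
\end{align*}
We also have the pointwise bound $|G_0(Y)|\le C\sqrt k\,\big(\sum_{i\ge1}G_i(Y)^2\big)^{1/2}$, which gives $|\E_Q g|\le C\sqrt{k\,\Var_Q(g)}$; this is Claim~\ref{claim:into:var}. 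Hence every degree-$\le k$ polynomial $f$ satisfies $|\E_P f-\E_Q f|\le C\alpha\sqrt{k}\sqrt{\Var_Q f}$. Equivalently, letting $L$ denote the projection of the likelihood ratio $dP/dQ$ onto polynomials of degree $\le k$ (in $L^2(Q)$), we get $\|L-1\|_{L^2(Q)}^2\le C^2\alpha^2 k$. We could define $L$ as the Riesz representer of $f\mapsto\E_P f$ on the finite-dimensional space $V_k$ of polynomials of degree $\le k$ with the $L^2(Q)$ inner product, which exists because $Q$ has all moments.

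\textbf{Tensorization (Lemma~\ref{lem:1-to-m-sample}).} On $m$ samples, any polynomial $F$ of total degree $\le k$ lies in $V_k^{\otimes m}$. On that space, the functional $F\mapsto\E_{P^{\otimes m}}F$ is represented by $\prod_i L(X_i)$, since it agrees with $P^{\otimes m}$ on products of per-sample polynomials. The low-degree advantage is therefore at most
\[
\Big\|\prod_i L(X_i)-1\Big\|_{L^2(Q^{\otimes m})}=\sqrt{\big(\E_Q L^2\big)^m-1}=\sqrt{(1+\|L-1\|^2)^m-1}\le\sqrt{(1+C^2\alpha^2k)^m-1},
\]
where we used $\E_Q L=1$. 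The ``in particular'' follows because $(1+x)^m\le e^{mx}=O(1)$ when $mx=O(1)$.

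\textbf{Expected obstacle.} The conditional application of Lemma~\ref{lem:mean-var-ratio} is the step needing the most care. The lemma as stated is for $(\lambda,g)$ jointly Gaussian, and its proof really shows a coefficient inequality for each univariate polynomial in $\lambda$. We must check that this inequality applies pointwise in $Y$ with coefficients $G_i(Y)$, and that the cross terms vanish in $\Var_Q(g)$ because of orthogonality in $\lambda$ independently of $Y$. The tensorization step is standard once one restricts to $V_k^{\otimes m}$, which contains all degree-$\le k$ multi-sample polynomials.
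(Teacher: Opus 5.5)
Your proposal is correct and is essentially the paper's proof. The single-sample bound comes from writing $X=\lambda Y$ and applying the univariate case of Lemma~\ref{lem:mean-var-ratio}, and the lift to $m$ samples is Lemma~\ref{lem:1-to-m-sample}; your representer $\prod_i L(X_i)$ is the same computation as the paper's, written without choosing a basis. Defining $L$ as a Riesz representer on $V_k$ is the right choice, since $P$ has an atom at $0$ and so $dP/dQ$ does not exist.

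The only variation is in the single-sample step. The paper averages over the direction first, applying the lemma once to $h(\lambda)=\E_Y[g(\lambda Y)]$ and using $\Var_Q g\ge\Var_\lambda h$. You instead apply the lemma for each fixed $Y$ and use $\Var_Q g\ge\E_Y[\Var_\lambda g(\lambda Y)]$ plus Cauchy--Schwarz. Either half of the law of total variance works equally well.
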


The proof of the theorem has two steps. We first show a single-sample LDA bound and then lift it to $m$ samples.
The following lemma shows how low-degree advantage scales from one sample to $m$ i.i.d. samples for any distribution $P$ and $Q$.
\begin{restatable}{lemma}{liftLDA}
\label{lem:1-to-m-sample}
    Let $P,Q$ be distributions on $\R^n$ with finite moments up to degree $2k$. Suppose the single-sample low-degree advantage is $\LDA_{\le k}^{(1)}(P,Q) = \delta$. Then for any $m$,
    \[
    \LDA_{\le k}^{(m)}(P,Q)^2 \le (1+\delta^2)^m-1.
    \]
\end{restatable}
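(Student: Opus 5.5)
The plan is to rewrite the low-degree advantage as a projection of the likelihood ratio onto an orthogonal basis for $Q$, and then exploit the tensor-product structure of that basis across the $m$ samples. Let $V_k$ be the space of polynomials on $\R^n$ of degree at most $k$. Equip it with the inner product $\langle f,g\rangle_Q=\E_Q[fg]$; this is well defined because $Q$ has finite moments up to degree $2k$.

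\textbf{Single-sample step.} Choose an orthonormal basis $\{\phi_0=1,\phi_1,\dots,\phi_N\}$ of $V_k$ under $\langle\cdot,\cdot\rangle_Q$. If some polynomials vanish $Q$-a.s., pass to the quotient; since the ratio in \eqref{eq:LDA} only involves $f$ through its behaviour under $Q$ and $P$, I will assume $Q$ is nondegenerate on $V_k$, which holds for $\Qrot$. Write any $f\in V_k$ with $\Var_Q f\neq 0$ as $f=c_0+\sum_{j\ge1}c_j\phi_j$. Then
\[
\E_P f-\E_Q f=\sum_{j\ge1}c_j\,\E_P[\phi_j], \qquad \Var_Q f=\sum_{j\ge1}c_j^2 .
\]
By Cauchy--Schwarz, with equality attainable, this gives
\[
\delta^2=\LDA^{(1)}_{\le k}(P,Q)^2=\sum_{j\ge1}\bigl(\E_P[\phi_j]\bigr)^2 .
\]

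\textbf{Multi-sample step.} Any polynomial $F(X_1,\dots,X_m)$ of total degree at most $k$ has degree at most $k$ in each block $X_i$. It therefore lies in the tensor product $V_k^{\otimes m}$, which is spanned by the products $\Phi_J=\prod_{i=1}^m\phi_{j_i}(X_i)$ for $J=(j_1,\dots,j_m)\in\{0,\dots,N\}^m$. By independence these products are orthonormal under $Q^{\otimes m}$, and $\Phi_{(0,\dots,0)}=1$.

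We enlarge the class of test functions from total degree $\le k$ to all of $V_k^{\otimes m}$; this can only increase the maximum. Repeating the single-sample computation then gives
\[
\LDA^{(m)}_{\le k}(P,Q)^2\le\sum_{J\neq 0}\bigl(\E_{P^{\otimes m}}\Phi_J\bigr)^2 .
\]
Independence factors each term as $\E_{P^{\otimes m}}\Phi_J=\prod_i \E_P[\phi_{j_i}]$. Summing over all $J$ and using $\E_P[\phi_0]=1$, we get
\[
\sum_J\prod_i\bigl(\E_P\phi_{j_i}\bigr)^2=\Bigl(\sum_{j=0}^N(\E_P\phi_j)^2\Bigr)^m=(1+\delta^2)^m .
\]
Removing the $J=0$ term, which equals $1$, yields $\LDA^{(m)}_{\le k}(P,Q)^2\le(1+\delta^2)^m-1$.

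\textbf{Main obstacle.} The only delicate point is justifying that the optimizer over $V_k^{\otimes m}$ dominates the total-degree-$k$ class. This is just set inclusion, but it requires that the Cauchy--Schwarz identity is applied inside the larger space with a genuine orthonormal basis, i.e.\ that products of per-sample orthonormal bases are orthonormal under $Q^{\otimes m}$. That in turn needs $\E_Q[\phi_j\phi_{j'}]$ to be finite, which is guaranteed by the finite moments up to degree $2k$. Everything else is direct algebra.
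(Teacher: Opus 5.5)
Your proposal is correct and follows essentially the same route as the paper. Both arguments use an orthonormal basis of the degree-$\le k$ polynomials under $Q$ with constant first element, tensorize it into an orthonormal family under $Q^{\otimes m}$, apply Cauchy--Schwarz to bound $\LDA^{(m)}_{\le k}(P,Q)^2$ by $\sum_{\gamma\neq 0}\prod_i \mu_{\gamma_i}^2$, and use $\delta^2=\sum_{j\neq 0}\mu_j^2$. The only difference is cosmetic: the paper groups terms by their support $S$ and keeps $|S|\le k$, giving the intermediate bound $\sum_{s=1}^{k}\binom{m}{s}\delta^{2s}$, whereas you sum over all multi-indices at once via $\bigl(\sum_{j}(\E_P\phi_j)^2\bigr)^m=(1+\delta^2)^m$, which is cleaner and yields the same final bound.
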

\begin{proof}
Let $\{\psi_j:j\in \mathcal{J}\}$ be an orthonormal basis of polynomials of degree at most $k$ under $Q$ with $\psi_0 = 1$. 
For a multi-index $\gamma=(\gamma_1,\dots,\gamma_m)$, define
\[
\Psi_\gamma(X_1,\dots,X_m) = \prod_{i=1}^m\psi_{\gamma_i}(X_i).
\]
Because $Q^{\otimes m}$ is a product measure, $\Psi_\gamma$ are orthonormal with respect to $Q^{\otimes m}$. For any polynomial $f$ of degree at most $k$, subtracting the constant $\E_{Q^{\otimes m}}[f]$ does not change LDA so we may assume $\E_{Q^{\otimes m}}[f]=0$.
Then $f$ can be written as 
\[
f = \sum_{\gamma\neq 0} c_\gamma \Psi_\gamma
\]
where $\gamma\neq 0$ means not all coordinates of $\gamma$ are zero. Let $s(\gamma)$ be the number of nonzero entries in $\gamma$. Then $s(\gamma)\le k$ since the degree of $f$ is at most $k$. By orthonormality,
\[
\Var_{Q^{\otimes m}}(f)=\E_{Q^{\otimes m}}[f^2]=\sum_{\gamma\neq 0}c_\gamma^2.
\]
Let $\mu_j=\E_P[\psi_j]$. We have
\[
\E_{P^{\otimes m}}[\Psi_\gamma] = \prod_{i=1}^m \mu_{\gamma_i}.
\]
Since $\E_{Q}[\psi_j]=0$ for $j\neq0$,
\[
\E_{P^{\otimes m}}[f] -\E_{Q^{\otimes m}}[f]= \sum_{\gamma\neq 0} c_\gamma \prod_{i=1}^m \mu_{\gamma_i}.
\]
By Cauchy-Schwarz,
\[
\LDA_{\le k}^{(m)}(P,Q)^2 =\frac{(\E_{P^{\otimes m}}[f] -\E_{Q^{\otimes m}}[f])^2}{\Var_{Q^{\otimes m}}(f)}
=\frac{\left(\sum_{\gamma\neq 0} c_\gamma \prod_{i=1}^m \mu_{\gamma_i}\right)^2}{\sum_{\gamma\neq 0}c_\gamma^2}
\le\sum_{\gamma\neq 0}\prod_{i=1}^m \mu_{\gamma_i}^2.
\]
Similarly, when $m=1$, we have 
\[
\LDA_{\le k}^{(1)}(P,Q)^2 = \frac{\left(\sum_{j} c_j\mu_{j}\right)^2}{\sum_{j}c_j^2} \le \sum_j \mu_j^2.
\]
When $c$ and $\mu$ are linearly dependent, the equation holds. Thus $\LDA_{\le k}^{(1)}(P,Q)^2 =\sum_j \mu_j^2=\delta^2$.
Fix a subset $S\subseteq[m]$ of size $s$. Group $\gamma$ by the nonzero coordinates by the set $S$,
\[
\sum_{\substack{\gamma:\{i:\gamma_i\neq 0\}=S, \\ \sum_{i\in S}\mathrm{deg}(\psi_{\gamma_i})\le k}}\prod_{i\in S} \mu_{\gamma_i}^2
\le \sum_{j\in\mathcal{J}}\prod_{i\in S} \mu_{j}^2 
 =\prod_{i\in S}\left(\sum_{j\in\mathcal{J}} \mu_j^2\right)=\left(\sum_{j\in\mathcal{J}} \mu_j^2\right)^s=\delta^{2s}.
\]
Sum over all possible $S$ with size $s\le k$,
\[
\LDA_{\le k}^{(m)}(P,Q)^2 \le \sum_{s=1}^k\binom{m}{s}\delta^{2s}\le (1+\delta^2)^m-1.
\]
\end{proof}

We now prove Theorem~\ref{thm:LDA}. 

\begin{proof}[Proof of Theorem~\ref{thm:LDA}]
We first bound the single-sample low-degree advantage. Let $f$ be any polynomial of degree at most $k$ on $\R^n$. Since adding a constant to $f$ does not change LDA, we may assume the constant term of $f$ is zero. Then
\[
\E_P[f]-\E_Q[f] = -\alpha\E_Q[f].
\]
Let $X\sim Q$. We can write $X=\lambda g$ with independent r.v. $g\sim \mathcal{N}(0,I_n)$ and $\lambda\sim \mathcal{N}(0,1)$. Define $h(\lambda)=\E_g[f(\lambda g)]$. Then $\E_\lambda[h(\lambda)]=\E_Q[f(X)]$ and by the law of total variance,
\[
\Var_Q(f(X))\ge\Var_\lambda(h(\lambda)).
\]
Moreover $h(0)=0$ i.e., $h(\lambda)$ has no constant term.  Hence we have reduced it to an analogous claim about the one dimensional scale distribution $\lambda$. We will now upper bound $|\E_\lambda[h(\lambda)]|/ \sqrt{\Var_\lambda[h(\lambda)]}$. 
Apply Lemma~\ref{lem:mean-var-ratio} to $h$ by taking $q(\lambda,g) = h(\lambda)$,
\[
|\E_Q[f(X)]|=\E_\lambda[h(\lambda)]\le C\sqrt{k\Var_\lambda(h(\lambda))}\le C\sqrt{k\Var_Q(f(X))}.
\]
Thus, 
\[
\LDA^{(1)}_{\le k}(P,Q) \le C\alpha\sqrt{k}.
\]
Apply Lemma~\ref{lem:1-to-m-sample} with $\delta = C\alpha\sqrt{k}$,
\[
\LDA^{(m)}_{\le k}(P,Q) \le \sqrt{(1+C^2\alpha^2 k)^m-1}.
\]
\end{proof}

In the above proof, we crucially used the property that $X \sim Q$ can be expressed 
as the product $X = \lambda g$ of two independent random variables, where 
$\lambda \in \mathbb{R}$ represents a scale distribution drawn from $\mu$, 
and $g \in \mathbb{R}^n$ is drawn from any rotationally invariant distribution. 
The specific choices of $g$ and $\lambda$ being Gaussian were for convenience. 
We now explain precisely which structural property of the scale distribution 
$\mu$ governs the LDA bound, and how the argument generalizes.

The central step in the proof reduces the $n$-dimensional single-sample 
advantage to a bound on the ratio $|E_\mu[h(\lambda)]| / \sqrt{\mathrm{Var}_\mu(h(\lambda))}$
for univariate degree-$k$ polynomials $h(\lambda)$ satisfying $h(0) = 0$. 
The ratio $|E_\mu[h]|/\sqrt{\mathrm{Var}_\mu(h)}$ has a clean
characterization via the orthonormal polynomials $\{p_j\}_{j \geq 0}$ for 
the measure $\mu$. 
Expanding $h(\lambda) = \sum_{j=0}^k c_j p_j(\lambda)$, orthonormality gives 
$E_\mu[h] = c_0$ and $\mathrm{Var}_\mu(h) = \sum_{j=1}^k c_j^2$. 
The constraint $h(0) = 0$ yields
\[
    c_0 = -\sum_{j=1}^k c_j\, p_j(0).
\]
By Cauchy--Schwarz,
\[
    c_0^2 \;\leq\; \Bigl(\sum_{j=1}^k c_j^2\Bigr) \Bigl(\sum_{j=1}^k p_j(0)^2\Bigr) 
    \;=\; \mathrm{Var}_\mu(h) \cdot \sum_{j=1}^k p_j(0)^2,
\]
so that
\begin{equation}\label{eq:christoffel-bound}
    \frac{|E_\mu[h(\lambda)]|}{\sqrt{\mathrm{Var}_\mu(h(\lambda))}} 
    \;\leq\; \sqrt{\sum_{j=1}^k p_j(0)^2},
\end{equation}
with equality achieved when $c_j \propto p_j(0)$ for $j \geq 1$.
The quantity
\[
    \mathcal{K}_k(\mu, 0) \;:=\; \sum_{j=0}^k p_j(0)^2
\]
is the Christoffel sum for $\mu$ at $0$ (the inverse of the 
Christoffel function $\lambda_k(\mu,0)^{-1}$). 
Since $p_0 \equiv 1$, the bound~\eqref{eq:christoffel-bound} becomes
$|E_\mu[h]|/\sqrt{\mathrm{Var}_\mu(h)} \leq \sqrt{\mathcal{K}_k(\mu,0) - 1}$.
The growth rate of $\mathcal{K}_k(\mu,0)$ with $k$ is therefore the 
fundamental quantity controlling the LDA bound.
In particular, one can choose any scale distribution for which this quantity grows slowly enough with $k$ yields the same kind of bound on the single-sample low-degree advantage, and hence the same lifting argument to $m$ samples.

\section{Simple Noise-Tolerant Algorithm} \label{sec:algo}

In this section we show that there are simple efficient algorithms that are not based on the low-degree polynomial method that can solve this problem efficiently. Consider the setting when the $\alpha=1/\poly(n)$ fraction lies on a $d \le 1$ dimensional subspace. On the one hand, the low-degree method fails even when an $\alpha$ fraction of points lie on the subspace. On the other hand, our simple algorithm samples points and checks whether there are $(d+1)$ points ({\em one} point if $d=0$) that are close to a $d$-dimensional subspace (close to $0$ if $d=0$). Moreover, it can achieve tolerance to worst-case errors, and rerandomization.  

An elegant and clever algorithm for robust subspace recovery due to Hardt and Moitra~\citep{HardtM2013} uses anti-concentration properties of $(1-\alpha)$ points not on the subspace $S$: it gives efficient algorithms when $\alpha \ge \frac{\dim(S)}{n}$. Subsequent work by Bhaskara, Chen, Perreault and Vijayaraghavan~\citep{BCPV} gave an improved algorithm in the smoothed analysis setting where the points not on the subspace are randomly perturbed (by a random vector of average length $\rho = 1/\poly(n)$), and gave guarantees of recovery when $\alpha \gtrsim (d/n)^\ell$ for any constant $\ell>0$ in $(nd)^{O(\ell)}$ time. In our setting where $d=O(1)$ and $\alpha=1/n^{O(1)}$, we can pick an appropriate $\ell=O(1)$ to get algorithmic guarantees. However this algorithm is only moderately noise tolerant: every point can be adversarially perturbed (moved) by $1/n^{O(\ell)}$. See also \citep{bakshi2021list, gao2026ellipsoid} for algorithmic results in other parameter regimes. We will instead give a simple algorithm specialized for the $d=O(1)$ setting that is elementary and also allows points to be moved by an amount $\eps = \Omega(1)$.

\subsection{Robust Subspace Recovery with $\eps$ Relative Error Perturbations. }

We define the hypothesis testing version of the problem below. We are given samples from a distribution $\calD$ over $\R^n$, and the goal is to distinguish between the following two hypothesis:
\begin{itemize}
\item {\bf \NO:} $\calD = \Qrot$ where $\Qrot$ is the rotationally invariant distribution defined in Section~\ref{sec:moment-matching}.
\item {\bf \YES:} $\calD$ is {\em any} distribution with at least an $\alpha$ probability mass supported on a planted subspace $S$ of dimension $d$. 
\end{itemize}
Note that the \YES~ case above is more general that the planted distribution in the lower bound example in Section~\ref{sec:moment-matching}.  Moreover, our algorithm will work even when the samples drawn from $\calD$ are then adversarially perturbed as follows. 
\begin{defn}[Adversarial $(\eps,p)$-noisy sample from $\calD$] \label{def:noisysamples}
Draw $m$ samples $x_1, \dots, x_m \in \R^n$ drawn i.i.d. from $\calD$. Then {\em an adversary} can perform the following for $p \in [0,1)$:
\begin{enumerate}
    \item {\em Re-randomize to draw from $\Qrot$: } For each $i \in [m]$, with probability $p$ we redraw $x_i \sim \Qrot$, and with probability $(1-p)$ it remains unaltered.  
    \item {\em Adversarial $\eps$-perturbation to each point:} for each $i \in [m]$ the sample $x_i$ is adversarially perturbed  to form $\tilde{x}_i = x_i + z_i$ where $\norm{z_i}_2 \le \eps \norm{x_i}_2$.
\end{enumerate} 
The input are the corrupted samples $\tilde{x}_1, \dots, \tilde{x}_m$.
\end{defn}
Note that the adversary observes the $m$ samples $x_1, \dots, x_m$, and the corruptions to the samples can be correlated arbitrarily. We remark that the low-degree polynomial method continues to fail in this model.  %
We analyze the following simple algorithm. 

\begin{figure}[ht]
\begin{tcolorbox}
\begin{center}
    \textbf{Noise Tolerant Algorithm} 
\end{center}
\textbf{Input:} Given $m$ samples $\tilde{x}_1, \dots, \tilde{x}_m$, dimension $d$, lower bound $\alpha_0 \in (0,1)$ for parameter $\alpha$, upper bound $p_0 \in [0,1)$ on parameter $p$ \\
\textbf{Output:} \NO, or \YES (along with a subspace $S$ of dimension $d$).
\begin{enumerate}
\item Normalize the vectors $a_i = \tilde{x}_i / \norm{\tilde{x}_i}$ for all $i \in [m]$. 
\item Run over all $\binom{m}{d+1}$ choices $T \subset [m]$ of size $|T|=d+1$ 
\item \begin{enumerate}
    \item[(i)] Let $A_T$ be the matrix formed by the corresponding points $A_T=(a_i: i \in T) \in \R^{n \times (d+1)}$.
    \item [(ii)] If $\sigma_{d+1}(A_T) \le \frac{1}{2} $, output $\YES$ and return. \\
    \% (If the subspace is desired, output the top-$d$ left singular space of $A_T$).  
\end{enumerate} 
\item Output $\NO$.
\end{enumerate}
\end{tcolorbox}
\caption{Noise Tolerant Algorithm for detecting if an $\alpha$ fraction of the points lie close to a $d=O(1)$-dimensional subspace}
\label{fig:noisealgorithm}
\end{figure}

\begin{restatable}[Noise tolerant algorithm with relative error perturbations]{theorem}{algo}
\label{thm:algo}
There exists a universal constants $c, C>0$ such that the following holds when $m \ge C (d+1)/ (\alpha (1-p))$ and any $0\le \eps\le \frac{1}{8(d+1)}$. There is an algorithm (given in Figure~\ref{fig:noisealgorithm}) that runs in time $O(\binom{m}{d+1})$ that given as input noisy samples $\tilde{x}_1, \dots, \tilde{x}_m$ as described above with $\norm{\tilde{x}_i - x_i } \le \eps \norm{x_i}$, can solve with high probability the testing problem. 
\end{restatable}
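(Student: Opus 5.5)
The proof has two halves: completeness (under \YES\ the algorithm finds a good tuple with high probability) and soundness (under \NO\ no tuple passes the test with high probability). The running time is immediate: the algorithm enumerates $\binom{m}{d+1}$ tuples and spends $O(n(d+1)^2)$ time on the SVD of each, treating $d$ as $O(1)$.

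\emph{Completeness.} Each sample independently survives re-randomization and lies on $S$ with probability at least $\alpha(1-p)$. With $m \ge C(d+1)/(\alpha(1-p))$, a Chernoff bound gives at least $d+1$ such indices $T$ with high probability (constant probability, boosted by taking $C$ large). For $i\in T$ write $a_i = u_i + w_i$, where $u_i = x_i/\norm{x_i}\in S$ is a unit vector. Since $\norm{\tilde x_i - x_i}\le \eps\norm{x_i}$, the normalization changes each point by at most $2\eps$, so $\norm{w_i}\le 2\eps$. The matrix $U_T=(u_i)_{i\in T}$ has rank at most $d$, hence $\sigma_{d+1}(U_T)=0$. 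By Weyl's inequality,
\[
\sigma_{d+1}(A_T)\le \norm{A_T-U_T}_F \le 2\eps\sqrt{d+1} \le \tfrac{1}{4\sqrt{d+1}} \le \tfrac12,
\]
using $\eps\le 1/(8(d+1))$. So the algorithm outputs \YES.

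\emph{Soundness.} Under \NO\ each $x_i=\lambda_i g_i$ with $g_i\sim N(0,I_n)$. The scale $\lambda_i$ cancels after normalization up to the perturbation, so the clean normalized points are i.i.d.\ uniform on the sphere. Fix a tuple $T$ of $d+1$ clean normalized points $b_i$.
\begin{itemize}
\item Since $d+1\ll n$, the Gram matrix of the $b_i$ is $I+E$ with $|E_{ij}|\lesssim \sqrt{\log(\cdot)/n}$ with overwhelming probability.
\item Hence $\sigma_{d+1}(B_T)\ge 1-(d+1)\max_{i\neq j}|E_{ij}|$ is close to $1$. A union bound over all $m^{d+1}$ tuples costs only a $\log m$ factor in the inner-product bound; note that $m=\poly(n)$ because $\alpha\ge 1/\poly(n)$ and $p$ is bounded away from $1$.
\item The adversary then moves each $b_i$ by at most $2\eps$, so by Weyl $\sigma_{d+1}(A_T)\ge \sigma_{d+1}(B_T)-2\eps\sqrt{d+1}\ge 1-o(1)-\tfrac14>\tfrac12$.
\end{itemize}
So the algorithm outputs \NO.

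The main obstacle is this soundness step. The perturbation is adversarial and correlated across points, so it must be handled deterministically by Weyl's inequality after establishing near-orthogonality of the clean points uniformly over all tuples. This needs a concentration bound for inner products of random unit vectors together with a union bound over $\binom{m}{2}$ pairs, which suffices because pairwise near-orthogonality controls every tuple. One small check is that the points are nonzero almost surely, since $\lambda_i\neq0$ a.s.; if some $\tilde x_i=0$, that point is simply discarded. If $m$ were super-polynomial, the approach would instead use a $\log m\ll n$ assumption.
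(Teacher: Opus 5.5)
Your proof is correct and is essentially the paper's. Completeness comes from the exact rank deficiency of $d+1$ normalized planted points plus a $2\eps$ per-column normalization error. Soundness comes from pairwise near-orthogonality of normalized $\Qrot$ samples (union bound over pairs) plus diagonal dominance of the Gram matrix. The one difference is that you apply Weyl's inequality after bounding the clean Gram matrix, whereas the paper folds the perturbation into each inner product; your variant is slightly cleaner and would even allow $\eps$ of order $1/\sqrt{d+1}$.

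One small correction concerns inputs with $\tilde x_i=0$: they should trigger output \YES{} rather than be discarded. Such an input has probability zero under \NULL{}, since there $\norm{\tilde x_i}\ge(1-\eps)\norm{x_i}>0$. But it is exactly what an atom of the planted mass at the origin looks like, so discarding it can break completeness (a degenerate case the paper's normalization step also ignores).
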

We remark that the algorithm allows $\eps$ to be as large as a constant when the dimension $d=O(1)$ even when the perturbations are worst-case. 

The argument for the \NULL~ setting (soundness) follows from the following lemma. 
\begin{lemma}[\NULL~ setting]\label{lem:alg:soundness}
In the setting of Theorem~\ref{thm:algo}, there exists a constant $c'>0$ such that with probability at least $1-m^2 \exp(-c'n)$ we have
$$\forall T \subset [m] \text{ with } |T|=d+1, ~~\sigma_{d+1}(A_T) > \frac{1}{2}.$$
\end{lemma}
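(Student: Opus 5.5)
In the \NULL~ setting every clean sample is $x_i=\lambda_i g_i$ with $g_i\sim\mathcal N(0,I_n)$ and $\lambda_i\sim\mathcal N(0,1)$. Re-randomization only replaces some samples by fresh draws from $\Qrot$, so it does not change this law. Since $\lambda_i\neq 0$ almost surely, the direction is $x_i/\norm{x_i}=\pm g_i/\norm{g_i}=:u_i$, and this is uniform on $\mathbb S^{n-1}$. The scale $\lambda_i$ therefore plays no role. The plan has two parts: (1) show that the clean unit directions $u_1,\dots,u_m$ are pairwise nearly orthogonal with high probability; (2) show that relative $\eps$-perturbations move each normalized point $a_i$ only slightly away from $u_i$, and then apply Weyl's inequality.

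\textbf{Step 1 (pairwise near-orthogonality).} For a fixed pair $i\ne j$, the inner product $\iprod{u_i,u_j}$ has the law of one coordinate of a uniform unit vector. Standard spherical concentration gives $\Pr[|\iprod{u_i,u_j}|>t]\le 2\exp(-c n t^2)$. I take $t=\frac{1}{8(d+1)}$ and union bound over at most $m^2$ pairs. The failure probability is then $m^2\exp(-c'n)$ with $c'$ depending on $d$; since $d=O(1)$, this matches the lemma's $m^2\exp(-c'n)$. On this event, for any $T$ with $|T|=d+1$, the Gram matrix $U_T^\top U_T$ of $U_T=(u_i:i\in T)$ has unit diagonal and off-diagonal entries of size at most $t$. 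By Gershgorin, $\lambda_{\min}(U_T^\top U_T)\ge 1-d\,t\ge 7/8$, so $\sigma_{d+1}(U_T)\ge\sqrt{7/8}$.

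\textbf{Step 2 (effect of perturbations).} Write $\tilde x_i=x_i+z_i$ with $\norm{z_i}\le\eps\norm{x_i}$. Then
\[
\norm{a_i-u_i}\le \frac{2\norm{z_i}}{\norm{x_i}}\le 2\eps,
\]
using $\norm{v/\norm v-w/\norm w}\le 2\norm{v-w}/\norm w$. This bound is deterministic and holds for any adversary, including one that sees all the samples and correlates its corruptions, because Step 1 concerned only the clean points. Hence
\[
\norm{A_T-U_T}_{op}\le\norm{A_T-U_T}_F\le 2\eps\sqrt{d+1}\le\frac{1}{4\sqrt{d+1}}\le\frac14 .
\]
Weyl's inequality for singular values then gives $\sigma_{d+1}(A_T)\ge\sqrt{7/8}-1/4>1/2$ for every $T$ simultaneously.

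\textbf{Main obstacle.} The only real issue is bookkeeping the constants so that the Gershgorin loss $d\,t$ plus the perturbation $2\eps\sqrt{d+1}$ fits under the gap between $1$ and $1/2$. The choice $t=\Theta(1/d)$ does this, at the cost of $c'$ depending on $d$, which is acceptable since $d=O(1)$. If a $d$-independent $c'$ were wanted instead, I would bound $\sigma_{d+1}$ of the Gaussian matrix $(g_i)_{i\in T}$ directly via a net argument, combined with $\norm{g_i}^2\in[n/2,2n]$ with high probability. That argument would also need a union bound over all $T$, giving $m^{d+1}$ rather than $m^2$ in the failure probability.
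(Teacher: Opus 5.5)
Your proof is correct and follows the same route as the paper: incoherence of the i.i.d.\ uniform directions $x_i/\norm{x_i}$ via spherical concentration and a union bound over pairs, then a Gram-matrix lower bound on $\sigma_{d+1}$ for every $(d+1)$-tuple. Your bookkeeping differs in two places, both in your favor. Fixing a constant threshold $t=\Theta(1/d)$ is what actually yields the stated $1-m^2\exp(-c'n)$ guarantee, whereas the paper's threshold $c\sqrt{\log m}/\sqrt{n}$ gives success probability $1-1/m$ and needs $m<2^{o(n)}$. Handling the adversary by Weyl's inequality on $A_T-U_T$, rather than by perturbing each inner product, costs only $2\eps\sqrt{d+1}$, so your soundness argument would even tolerate $\eps=\Theta(1/\sqrt{d+1})$.
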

\begin{proof}
The proof follows from the {\em incoherence} of the matrix $A$. 

\begin{claim}\label{claim:noise:incoherence}
There exists universal constants $c,c'>0$ such that the set of vectors $x_1, \dots, x_m \sim \Qrot$, the (corrupted) normalized vectors $\{a_i = \frac{(x_i+z_i)}{\norm{x_i+z_i}}: i \in [m]\}$ satisfies with probability at least $1-O\big(m^2  \exp(-c' n)\big)$ that 
\begin{equation}\label{eq:claim:condition}
\forall i \ne  j \in [m], ~ |\iprod{a_i, a_j}| \le \frac{1}{(1-\eps)^2}\Big(\frac{c\sqrt{\log m}}{\sqrt{n}} + 2\eps+\eps^2\Big). 
\end{equation}
\end{claim}
\noindent {\em Proof of the Claim.} From standard concentration properties of random unit vectors we have for some universal constant $c>0$, 
\begin{align}
\Pr_{\hat{x}, \hat{y} \sim_{unif} \mathbb{S}^{n-1}} \Big[ |\iprod{\hat{x}, \hat{y}} | \ge   \frac{c\sqrt{\log m}}{\sqrt{n}}\Big] &\le \frac{1}{m^3}, ~~\text{ and hence  by a union bound},\nonumber\\ 
\forall i \ne j \in [m], ~\Pr_{x_i, x_j \sim \Qrot} \Big[ \Big|\Big\langle \frac{x_i}{\norm{x_i}}, \frac{x_j}{\norm{x_j}} \Big\rangle \Big| \le   \frac{c\sqrt{\log m}}{\sqrt{n}}\Big] &\ge 1- \frac{1}{m}. \label{eq:soundness:1}
\end{align}
Let us condition on the event in \eqref{eq:soundness:1}. Also note that $\norm{x_i+z_i} \ge (1-\eps) \norm{x_i}$ Then we have for all $i \ne j \in [m]$ 
\begin{align*}
|\iprod{a_i, a_j}| &= \Big|\Big\langle\frac{x_i + z_i}{\norm{x_i+z_i}},\frac{x_j+z_j}{\norm{x_j+z_j}}\Big\rangle\Big| \le \frac{|\iprod{x_i, x_j}|}{(1-\eps)^2\norm{x_i} \norm{x_j}} + \frac{|\iprod{z_i, x_j}|+|\iprod{x_i, z_j}|+|\iprod{z_i,z_j}|}{(1-\eps)^2\norm{x_i}\norm{x_j}} &\\
&\le  \frac{c\sqrt{\log m}}{(1-\eps)^2\sqrt{n}} + \frac{\norm{z_i}\norm{x_j}+\norm{x_i}\norm{z_j}+\norm{z_i}\norm{z_j}}{\norm{x_i} \norm{x_j}} \le  \frac{c\sqrt{\log m}}{(1-\eps)^2\sqrt{n}} + \frac{2\eps+\eps^2}{(1-\eps)^2}&,
\end{align*}
where the last line used \eqref{eq:soundness:1} and $\norm{z_i}\le \eps \norm{x_i}$. 

Now to complete the rest of the proof, consider any $T \subset [m]$ with $|T|=d+1$, and a test unit vector $\beta \in \R^{d+1}$ for the matrix $A_T$. We have 
\begin{align*}
\norm{A_T \beta}_2^2 & = \Big\| \sum_{i \in T} \beta_i a_i \Big\|_2^2 \ge \sum_{i \in T} \beta_i^2 \norm{a_i}_2^2 - \sum_{i \ne j}  |\beta_i| |\beta_j| |\iprod{a_i, a_j}|\\
&\ge \sum_{i \in T} \beta_i^2  - \sum_{i \ne j \in T}   |\beta_i| |\beta_j|  \Big(\frac{c \sqrt{\log m}}{\sqrt{n}(1-\eps)^2} + \frac{2\eps+\eps^2}{(1-\eps)^2}\Big) \\
&\ge 1-  \norm{\beta}_1^2  \Big(\frac{c \sqrt{\log m}}{\sqrt{n}(1-\eps)^2} + \frac{2\eps+\eps^2}{(1-\eps)^2}\Big) \ge 1 - \frac{(d+1)}{(1-\eps)^2}\Big(\frac{c \sqrt{\log m}}{\sqrt{n}} + 2\eps+\eps^2\Big)\\
&>\frac{1}{2} \qquad \text{ since } \eps < \frac{1}{8(d+1)}, \text{ and } m < 2^{o(n)}.
\end{align*}

\end{proof}

\begin{proof}
To prove the algorithm works, we will show that there is (a) an approximate linear dependence between $d+1$ of the points sampled from the $\alpha$ portion in the subspace in the \YES~ case, and (b) no approximate linear dependence between any set of $d+1$ points in the \NO~ case. 

\paragraph{\YES~ case (completeness): } 
Let $x_i$ be the uncorrupted sample from $\calD$ corresponding to $\tilde{x}_i$, and let $\tilde{x}_i = x_i + z_i$ with $\norm{z_i} \le \eps \norm{x_i}$. 
Given $m \ge C (d+1)/(\alpha (1-p))$, there exists in expectation at least $Cd$ points from $\calD$ lying in a subspace $S$; hence with high probability there exists $T^\star \subset [m]$ with $|T|=d+1$ corresponding to points $\{x_i : i \in T \}$ that belong to a subspace $S$ of dimension $d$.  Recall that $a_i = \tilde{x}_i / \norm{\tilde{x_i}}$. 

The submatrix $M_{T^\star} = (\frac{x_i}{\norm{x_i}} : i \in T^\star)$ has $\sigma_{d+1} (M_{T^{\star}})= 0$. Let $\beta \in \R^{d+1}$ be a unit vector satisfying $M_{T^{\star}} \beta =0$. Then
\begin{align*}
A_{T^{\star}} \beta &= \sum_{i \in T^{\star}} \beta_i a_i = \sum_{i \in T^{\star}} \beta_i \frac{x_i+z_i}{\norm{x_i+z_i}} = \sum_{i \in T^{\star}} \beta_i \frac{x_i}{\norm{x_i+z_i}} + \beta_i \frac{z_i}{\norm{x_i+z_i}}&\\
&=  \sum_{i \in T^{\star}} \beta_i \Big( \frac{x_i}{\norm{x_i+z_i}} - \frac{x_i}{\norm{x_i}}\Big) + \sum_{i \in T^\star} \beta_i \frac{z_i}{\norm{x_i+z_i}} & \text{ (since } M_{T^\star} \beta=0 \text{)}\\
\norm{A_{T^{\star}} \beta}_2 &\le \sum_{i \in T^\star} |\beta_i| \Big(\frac{\norm{x_i}}{\norm{x_i+z_i}} -1\Big) + \eps \sum_{i \in T^\star} |\beta_i|\frac{\norm{x_i}}{\norm{x_i+z_i}}& \text{ (since } \norm{z_i} \le \eps \norm{x_i}\text{)}\\
&\le \Big(\sum_{i \in T^{\star}} |\beta_i|\Big) \Big( \frac{2\eps}{1-\eps}  \Big) \le \frac{2\eps \sqrt{d+1}}{1-\eps} < \frac{1}{2}& \text{ (since } \eps<\frac{1}{8(d+1)} \text{)}.
\end{align*}

\paragraph{\NO~ case (soundness): } This follows from Lemma~\ref{lem:alg:soundness}. This completes the proof.

\end{proof}

\subsection{Model with Additive Perturbations}\label{sec:additive}

In this section, we gave a variant of the algorithm that also works with worst-case additive perturbations of bounded magnitude. 

We consider the same hypothesis testing problem as before: We are given samples from a distribution $\calD$ over $\R^n$. The goal is to distinguish between the following two hypothesis:
\begin{itemize}
\item {\bf \NO:} $\calD = \Qrot$ where $\Qrot$ is the rotationally invariant distribution defined in Section~\ref{sec:moment-matching}.
\item {\bf \YES:} $\calD$ is {\em any} distribution with at least an $\alpha$ probability mass supported on a planted subspace $S$ of dimension $d$. 
\end{itemize}

We now consider the setting where the samples drawn from $\calD$ are then adversarially perturbed as follows. 
\begin{defn}[Adversarial additive $(\eta,p)$-noisy sample from $\calD$] \label{def:addnoisysamples}
Draw $m$ samples $x_1, \dots, x_m \in \R^n$ drawn i.i.d. from $\calD$. Then {\em an adversary} can perform the following for $\eta,p \in [0,1)$:
\begin{enumerate}
    \item {\em Re-randomize to draw from $\Qrot$: } For each $i \in [m]$, with probability $p$ we redraw $x_i \sim \Qrot$, and with probability $(1-p)$ it remains unaltered.  
    \item {\em Adversarial $\eta$-perturbation to each point:} for each $i \in [m]$ the sample $x_i$ is adversarially perturbed  to form $\tilde{x}_i = x_i + z_i$ where $\norm{z_i}_2 \le \eta$.
\end{enumerate} 
The input are the corrupted samples $\tilde{x}_1, \dots, \tilde{x}_m$.
\end{defn}

We analyze the following simple algorithm. 

\newcommand{\Count}{\textrm{Count}}
\begin{figure}[ht]
\begin{tcolorbox}
\begin{center}
    \textbf{Noise Tolerant Algorithm for Additive Perturbations} 
\end{center}
\textbf{Input:} Given $m$ samples $\tilde{x}_1, \dots, \tilde{x}_m$, dimension $d$,  parameter $\alpha \in (0,1)$, parameter $p \in (0,1)$, and threshold $\tau>0$ \\
\textbf{Output:} \NO, or \YES (along with a subspace $S$ of dimension $d$).
\begin{enumerate}
\item Subsample $J \subseteq [m]$ of size $|J| = 2\log (4/\delta) (d+1) /(\alpha (1-p))$. 
\item Run over all $\binom{m}{d+1}$ choices $T \subset J$ of size $|T|=d+1$ 
\item \begin{enumerate}
    \item[(i)] Let $A_T$ be the matrix formed by the corresponding points $A_T=(\tilde{x}_i: i \in T) \in \R^{n \times (d+1)}$.
    \item [(ii)] If $\sigma_{d+1}(A_T) \le \tau $, output \YES. \\
    \% (If the subspace is desired, output the top-$d$ left singular space of $A_T$).  
\end{enumerate} 
\item Output \NO %
\end{enumerate}
\end{tcolorbox}
\caption{Algorithm for detecting even under {\em additive perturbations} if an $\alpha$ fraction of the points lie close to a $d=O(1)$-dimensional subspace}
\label{fig:addnoisealgorithm}
\end{figure}

\begin{restatable}[Noise tolerant algorithm with additive error perturbations]{theorem}{algotwo}
\label{thm:algo:additive}
There exists a universal constants $c,c'>0, C\ge 1$ such that the following holds when $m \ge  2(d+1) \log(4/\delta)/ (\alpha (1-p))$ and any $\delta \in (0,1), 0\le \eta\le \frac{\alpha (1-p)\sqrt{n}}{C (d+1) \log(4/\delta)^2}$. Then there is an algorithm (given in Figure~\ref{fig:addnoisealgorithm} with $\tau = \frac{\alpha (1-p) \sqrt{n}}{c\log^2(4/\delta)}$) running in time $O(\binom{m}{d+1})$ that given as input $\tilde{x}_1, \dots, \tilde{x}_m$ generated as described above with $\norm{\widetilde{x}_i - x_i}_2 \le \eta$ can solve the testing problem with probability at least $1-\delta$.
\end{restatable}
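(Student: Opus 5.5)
The proof has two halves, mirroring Theorem~\ref{thm:algo}: completeness (in the \YES~case some $(d+1)$-subset $T\subseteq J$ has $\sigma_{d+1}(A_T)\le\tau$) and soundness (in the \NO~case every $(d+1)$-subset of $J$ has $\sigma_{d+1}(A_T)>\tau$). Because the perturbations are additive, normalization no longer helps. Soundness must instead come from two facts about $\Qrot$: the scale $|\lambda|$ is anti-concentrated near $0$, and the directions are nearly orthogonal. Both only need to hold for the $|J|=2(d+1)\log(4/\delta)/(\alpha(1-p))$ subsampled points, which is why $\tau$ scales like $\alpha(1-p)\sqrt n/\log^2(4/\delta)$.

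\textbf{Completeness.} Each sample independently survives re-randomization and lies in $S$ with probability at least $\alpha(1-p)$. The expected number of such points in $J$ is therefore at least $2(d+1)\log(4/\delta)$. By a Chernoff bound, with probability at least $1-\delta/2$ there are at least $d+1$ of them; call this set $T^\star$. The clean matrix $M_{T^\star}=(x_i:i\in T^\star)$ has rank at most $d$, so $\sigma_{d+1}(M_{T^\star})=0$. By Weyl's inequality,
\[
\sigma_{d+1}(A_{T^\star})\le \norm{A_{T^\star}-M_{T^\star}}\le \norm{(z_i)_{i\in T^\star}}_F\le \sqrt{d+1}\,\eta .
\]
The hypothesis $\eta\le \alpha(1-p)\sqrt n/(C(d+1)\log^2(4/\delta))$ with $C$ large relative to $c$ gives $\sqrt{d+1}\,\eta\le\tau$.

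\textbf{Soundness.} Write $x_i=\lambda_i g_i$ with $\lambda_i\sim N(0,1)$ and $g_i\sim N(0,I_n)$.
\begin{itemize}
\item Union-bounding the Gaussian small-ball estimate $\Pr[|\lambda_i|\le t]\le t$ over $J$, and taking $t=\delta/(4|J|)\gtrsim \alpha(1-p)\delta/((d+1)\log(4/\delta))$, gives $|\lambda_i|\ge t$ for all $i\in J$ with probability at least $1-\delta/4$.
\item With probability at least $1-\delta/4$ (using $n\gg\log|J|$), every $\norm{g_i}\ge\sqrt n/2$ for $i\in J$.
\item As in Claim~\ref{claim:noise:incoherence}, the normalized directions $\hat g_i$ satisfy pairwise $|\iprod{\hat g_i,\hat g_j}|\le c\sqrt{\log|J|/n}$.
\end{itemize}
Now fix $T\subseteq J$ with $|T|=d+1$. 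Set $D=\mathrm{diag}(\norm{x_i})_{i\in T}$ and let $U=(\hat g_i)_{i\in T}$, so that $A_T=UD+Z_T$. The Gram argument from the proof of Lemma~\ref{lem:alg:soundness} gives $\sigma_{d+1}(U)\ge 1/2$. Hence
\[
\sigma_{d+1}(UD)\ge \tfrac12\min_{i\in T}\norm{x_i}\ge \tfrac14\, t\sqrt n ,
\]
and by Weyl's inequality $\sigma_{d+1}(A_T)\ge t\sqrt n/4-\sqrt{d+1}\,\eta$.

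\textbf{Matching the parameters.} It remains to check that $t\sqrt n/4-\sqrt{d+1}\,\eta>\tau$, with $t\sqrt n\gtrsim \alpha(1-p)\delta\sqrt n/((d+1)\log(4/\delta))$. This is the main obstacle. The stated $\tau$ has no $\delta$ factor and has $\log^2(4/\delta)$ in place of $\log(4/\delta)$, so the crude union bound on small $|\lambda_i|$ loses a factor $\delta/\log(4/\delta)$ relative to the target. I would first try to absorb this by treating $\delta$ as a constant. If that is not acceptable, I would replace the union bound by a sharper argument. One option is to require only that no $(d+1)$ points have small $|\lambda_i|$ simultaneously; since $\sigma_{d+1}$ is governed by the smallest norms, this may need a finer column-by-column bound. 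Another option is to condition on the count of small $|\lambda_i|$ via a Chernoff bound. In either case the constants $c$ and $C$ are then chosen to close both inequalities, and a final union bound over the failure events gives overall success probability at least $1-\delta$.
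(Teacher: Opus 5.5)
Your completeness argument is fine. Your soundness skeleton is correct and is essentially the paper's route: a small-ball lower bound on $|\lambda_i|$, near-orthogonality of the directions, $\sigma_{d+1}(UD)\ge\sigma_{d+1}(U)\min_{i\in T}\norm{x_i}$, then Weyl for $Z_T$. The paper gets to the same place by converting the additive perturbation into a relative one on $J_{\mathrm{big}}=\{i\in J:\norm{x_i}\ge 2\tau\}$ and reusing Lemma~\ref{lem:alg:soundness}.

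The gap is that you never reach the stated $\tau$. Your union bound gives $\tau\asymp\delta\,\alpha(1-p)\sqrt n/((d+1)\log(4/\delta))$, which is smaller than the stated threshold by a factor $\asymp\delta\log(4/\delta)/(d+1)$, not $\delta/\log(4/\delta)$ as you wrote. The two sharper arguments you suggest cannot recover this. Because $A_Te_i=\tilde x_i$, you have $\sigma_{d+1}(A_T)\le\min_{i\in T}\norm{\tilde x_i}$, so a \emph{single} index $i\in J$ with $\norm{\tilde x_i}\le\tau$ already forces the output \YES. Soundness therefore requires that $J$ contain no short vector at all. Neither excluding $d+1$ simultaneously small $|\lambda_i|$ nor a Chernoff bound on their number can help.

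In fact the obstruction lies in the statement itself. Take the \NULL~case with $\eta=0$. Since $\norm{x}=|\lambda|\norm{g}$ with $\norm{g}\approx\sqrt n$, we have $\Pr[\norm{x}\le\tau]\asymp\tau/\sqrt n=\alpha(1-p)/(c\log^2(4/\delta))$. Among the $|J|=2(d+1)\log(4/\delta)/(\alpha(1-p))$ independent samples, the expected number of such points is therefore $\asymp(d+1)/(c\log(4/\delta))$. The algorithm then errs with probability $\Omega(\min\{1,(d+1)/(c\log(4/\delta))\})$, which exceeds $\delta$ for small $\delta$, whatever the universal constant $c$.

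The paper's proof breaks at exactly this step. From $\E|J\setminus J_{\mathrm{big}}|\le 1/(2\log(4/\delta))$ it concludes $\Pr[J_{\mathrm{big}}=J]\ge 1-\delta/2$, but Markov's inequality only gives $1-1/(2\log(4/\delta))$.

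So your $\delta$-dependent threshold (with $\eta\lesssim\tau/\sqrt{d+1}$) is, up to constants, the correct guarantee for this single-subsample algorithm. Treating $\delta$ as a constant rescues the stated form only if $c$ may depend on $\delta$ and $d$; one needs $c\gtrsim(d+1)/(\delta\log(4/\delta))$. A $\delta$-free threshold $\tau\asymp\alpha(1-p)\sqrt n/(d+1)$ with confidence $1-\delta$ requires changing the algorithm. For example, run it on $O(\log(1/\delta))$ disjoint blocks of size $O((d+1)/(\alpha(1-p)))$ and take a majority vote.
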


In this model, the average length of a point drawn from $\Qrot$ is $\E[\norm{X}_2^2]=\Theta(n)$. Moreover, an $\alpha$ fraction of the points drawn from $\Qrot$ have length $\|X\|_2 \le  c_\star \alpha \sqrt{n}$ for some absolute constant $c_\star>0$-- hence when $\eta \ge c_\star \alpha \sqrt{n}$, the hypothesis testing problem becomes statistically impossible. Theorem~\ref{thm:algo:additive} shows that the algorithm handles $\eta$ that is within a constant factor (for $d=O(1)$) of the best possible bound of $c_\star \alpha \sqrt{n}$.   

The following claim shows that for most sample points drawn from $\Qrot$, their length is at least $2 \tau$. 
\begin{claim}\label{claim:lengthlb}
There exists a universal constant $c>0$ such that for any $\gamma \in (0,1/2)$, we have
\begin{equation}\label{eq:Qrot:lengthlb}
\Pr_{X \sim \Qrot}\Big[ \norm{X}_2 \le \gamma \sqrt{n} \Big] \le \min\{c \gamma,1\}. 
\end{equation}
In particular, by setting $\gamma =  \alpha(1-p)/(4c\log^2(4/\delta) (d+1))$, we get that the fraction of points drawn from $\Qrot$ whose length $\norm{X}_2 \le \frac{\alpha (1-p) \sqrt{n}}{2c \log(4/\delta)^2}$ is at most $\frac{\alpha (1-p)}{4 \log(4/\delta)^2}$. 
\end{claim}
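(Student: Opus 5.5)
Write $X=\lambda g$ with $\lambda\sim\mathcal N(0,1)$ and $g\sim\mathcal N(0,I_n)$ independent, as in the proof of Theorem~\ref{thm:LDA}. Then $\norm{X}_2=|\lambda|\,\norm{g}_2$, and the event $\norm{X}_2\le\gamma\sqrt n$ forces either $\norm{g}_2$ to be small or $|\lambda|$ to be small. The bound will come from splitting along these two possibilities and using independence.

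\textbf{Step 1 (norm of $g$).} By standard Gaussian norm concentration (e.g.\ Laurent--Massart, or Lipschitz concentration of $\norm{g}_2$ around $\sqrt{n}-O(1)$), we have $\Pr[\norm{g}_2\le\sqrt n/2]\le e^{-c_0 n}$ for a universal $c_0>0$.

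\textbf{Step 2 (scale $\lambda$).} On the complementary event $\norm{g}_2>\sqrt n/2$, the condition $\norm{X}_2\le\gamma\sqrt n$ implies $|\lambda|\le 2\gamma$. Since the density of $\mathcal N(0,1)$ is at most $1/\sqrt{2\pi}$, and $\lambda$ is independent of $g$, this has probability at most $4\gamma/\sqrt{2\pi}\le 2\gamma$. Hence
\[
\Pr\big[\norm{X}_2\le\gamma\sqrt n\big]\le 2\gamma+e^{-c_0n}.
\]

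\textbf{Step 3 (absorbing the tail term).} We need $e^{-c_0n}\lesssim\gamma$. Alternatively we can avoid this requirement altogether by integrating exactly: conditioning on $g$ and using the density bound gives
\[
\Pr\big[\norm{X}_2\le\gamma\sqrt n\big]\le\sqrt{2/\pi}\,\gamma\sqrt n\;\E\big[\norm{g}_2^{-1}\big].
\]
Since $\norm{g}_2^2\sim\chi^2_n$, we have $\E[\norm{g}_2^{-1}]=O(1/\sqrt n)$ for $n\ge 2$ (by a ratio of Gamma functions, $\Gamma((n-1)/2)/(\sqrt2\,\Gamma(n/2))\sim 1/\sqrt n$). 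This yields $\Pr\le c\gamma$ with no additive error term, and the bound $\min\{c\gamma,1\}$ is trivial. I would use this cleaner route. The main (minor) obstacle is handling $n=1$, where $\E|g|^{-1}=\infty$; there one uses the direct fact that $|\lambda g|$ has a density with at most a logarithmic singularity at $0$, or simply restricts to $n\ge2$.

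\textbf{Step 4 (the ``in particular'').} Substitute the stated $\gamma$: then $c\gamma=\alpha(1-p)/(4\log^2(4/\delta)(d+1))\le\alpha(1-p)/(4\log^2(4/\delta))$. Also $\gamma\sqrt n\le \alpha(1-p)\sqrt n/(2c\log^2(4/\delta))$ (adjusting constants as needed), so the claimed fraction bound follows by monotonicity of the probability in the threshold.
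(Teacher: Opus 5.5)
Your Steps 1--2 are the paper's own argument: a small-ball bound for the Gaussian scale $\lambda$ combined with norm concentration of $g$. Your Step 3 is a genuine improvement on it. The split argument only yields $2\gamma+e^{-c_0 n}$, which is not $O(\gamma)$ once $\gamma\lesssim e^{-c_0n}$. Conditioning on $g$ instead gives $\Pr[\norm{X}_2\le\gamma\sqrt n]\le\sqrt{2/\pi}\,\gamma\sqrt n\,\E[\norm{g}_2^{-1}]$, and $\sqrt n\,\E[\norm{g}_2^{-1}]=\sqrt{n/2}\,\Gamma(\tfrac{n-1}{2})/\Gamma(\tfrac n2)\le 2$ for all $n\ge2$ by log-convexity of $\Gamma$. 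So the bound holds for every $\gamma>0$ with $c=2$. One correction: the ``logarithmic singularity'' route cannot handle $n=1$. There $\lambda g$ has density $\frac{1}{\pi}K_0(|w|)$, so $\Pr[|\lambda g|\le\gamma]\asymp\gamma\log(1/\gamma)$ and the claim is false. Restricting to $n\ge 2$ is necessary, not optional.

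The real gap is Step 4, where the monotonicity runs the wrong way. Write $T:=\frac{\alpha(1-p)\sqrt n}{2c\log^2(4/\delta)}$. The fact $\gamma\sqrt n\le T$ gives $\Pr[\norm{X}_2\le\gamma\sqrt n]\le\Pr[\norm{X}_2\le T]$, so an upper bound at the smaller threshold $\gamma\sqrt n$ tells you nothing at the larger threshold $T$.

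What the substitution legitimately gives is $\Pr[\norm{X}_2\le\gamma\sqrt n]\le c\gamma=\frac{\alpha(1-p)}{4(d+1)\log^2(4/\delta)}$. This is in fact the form the paper later uses in the proof of Theorem~\ref{thm:algo:additive}. For the threshold $T$ as literally stated, you must apply the main inequality at $\gamma'=T/\sqrt n$. That yields only $c\gamma'=\frac{\alpha(1-p)}{2\log^2(4/\delta)}$, which is a factor $2$ too weak.

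The repair is a choice of constant. Let $c'$ be any constant for which the main inequality holds (e.g.\ $c'=2$ from your Step 3), and set $c:=2c'$. The main inequality still holds with $c$, and now $\Pr[\norm{X}_2\le T]\le c'\cdot\frac{\alpha(1-p)}{2c\log^2(4/\delta)}=\frac{\alpha(1-p)}{4\log^2(4/\delta)}$. If that is what ``adjusting constants as needed'' was meant to say, it has to be said explicitly, because the monotonicity inference as written is invalid.
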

\begin{proof}
This follows directly from the small ball probability bound for a standard Gaussian i.e., $\Pr_{g \sim N(0,1)}[|g| \le \delta/2  ]\le c \delta/2$, along with length concentration of a standard spherical Gaussian.  
\end{proof}

We now proceed to the proof of Theorem~\ref{thm:algo:additive}.
\begin{proof}[Proof of Theorem~\ref{thm:algo:additive}]
To prove the algorithm works, we will show that there is (a) an approximate linear dependence between many $(d+1)$-tuple of the points sampled from the $\alpha$ portion in the subspace in the \YES~ case, and (b) approximate linear dependence between much fewer set of $(d+1)$-tuples of the samples in the \NO~ case. 

\paragraph{\YES~ case (completeness): } 
Let $x_i$ be the uncorrupted sample from $\calD$ corresponding to $\tilde{x}_i$, and let $\tilde{x}_i = x_i + z_i$ with $\norm{z_i} \le \eta$. 
Given $m \ge   2\log(4/\delta) (d+1)/(\alpha (1-p))$, there exists in expectation $(1-p)\alpha |J| \ge 2(d+1) \log(4/\delta)$ points from $\calD$ lying in a subspace $S$. Hence with probability $1-\delta/4$ there are at least $d+1$ samples $J_S \subseteq J$ lying in subspace $S$. %
For any tuple $T \subseteq J_S$ with $|T|=d+1$, we have that $\sigma_{d+1}((x_i : i \in T))=0$. By Weyl's inequality, we have $\sigma_{d+1}(A_T) \le \eta \sqrt{d+1} \le \tau/2$, as required. 

\paragraph{\NO~ case (soundness): } The soundness analysis is very similar to the soundness analysis of Theorem~\ref{thm:algo}. We know from Claim~\ref{claim:lengthlb}, that for at least $1-\alpha/4$ fraction of the samples $\{x_i: i \in J\}$ drawn from $\Qrot$, we have $\norm{x_i}_2 \ge 2 \tau$. %
For these samples which constitute at least $(1-\alpha/4)$ fraction, the additive perturbations can be viewed as multiplicative perturbations, and we can apply Claim~\ref{claim:noise:incoherence} to get a bound. 

\newcommand{\Jbig}{J_{\textrm{big}}}

Formally, among the $m$ samples, let $\Jbig = \{i \in J: \norm{x_i}_2 \ge 2 \tau\}$.  Consider a modified set of points $x'_1, \dots, x'_m$ where for each $i \in J$ we have $x'_i = \tilde{x}_i$ if $i \in \Jbig$ and $x'_i = x_i$ if $i \notin \Jbig$. First we observe that these points $\{x'_i: i \in J\}$ satisfy the conditions of the Lemma~\ref{lem:alg:soundness} as the perturbations satisfy $\norm{x'_i - x_i}_2 < \frac{1}{8(d+1)} \norm{x_i}_2$. Let $A'$ be the matrix with its $i$th column of $A'$ equal to $x'_i/ \norm{x'_i}$. Hence we can now apply Lemma~\ref{lem:alg:soundness} to conclude that for some absolute constant $c'>0$, we have with probability at least $1-O(m^2 \exp(-c'n))$ that 
$$ \forall T \subseteq [J] \text{ s.t. } |T|=d+1, ~  \sigma_{d+1}(A'_T) >1/2.$$

Moreover, we have from Claim~\ref{claim:lengthlb} that 
$$\E[|J \setminus \Jbig|] \le \frac{\alpha (1-p)}{4 (d+1)\log(4/\delta)^2} \cdot \frac{2 \log(4/\delta) (d+1)}{\alpha (1-p)} \le \frac{1}{2 \log(4/\delta)}. $$
Hence with probability at least $1-\delta/2$, we have that $\Jbig=J$. In other words, with probability at least $1-\delta/2$, $x'_i = \tilde{x}_i~ \forall i \in J$. Hence we have with probability at least $1-\delta/2 - m^2 \exp(-c' n)$, 
$$ \forall T \subseteq J \text{ s.t. } |T|=d+1, ~~\sigma_{d+1}(A_T) \ge \min_{i \in T} \norm{\tilde{x}_i}_2 \cdot \sigma_{d+1}(A'_T) \ge \frac{1}{2} \cdot 2\tau \ge \tau. $$
This  concludes the proof. 
\end{proof}

\section*{Acknowledgements}
We thank Alex Wein, Miklos Racz and Sidhanth Mohanty for feedback on an early draft, and Sanchit Kalhan for helpful discussions related to the moment matching bounds. 
The authors were supported by the NSF-funded Institute for Data, Econometrics, Algorithms and Learning (IDEAL) through the grant NSF ECCS-2216970. 

\printbibliography

\end{document}